\documentclass{article}

\usepackage{microtype}
\usepackage{graphicx}
\usepackage{subfigure}
\usepackage{booktabs} 
\usepackage{amsmath}
\usepackage{amsfonts}
\usepackage{amsthm}

\usepackage{hyperref}


\usepackage[accepted]{icml2019}


\icmltitlerunning{Discovering Context Effects from Raw Choice Data}

\begin{document}
\setlength{\abovedisplayskip}{5pt}
\setlength{\belowdisplayskip}{5pt}

\newcommand{\xhdr}[1]{\paragraph*{\bf #1.}}

\newcommand\defeq{:=}
\newcommand\allsubsets{\mathcal{C}}
\newcommand\uniqueD{\mathcal{C}_{\mathcal{D}}}
\newcommand\E{\mathbb{E}}
\newcommand\Var{\text{Var}}
\newcommand\Cov{\text{Cov}}
\newcommand\abs{\text{abs}}
\newcommand\nper{{n_{\text{per}}}}
\newcommand\eff{\textbf{f}}
\newcommand\exx{\mathcal{X}}
\newtheorem{defn}{Definition}
\newtheorem{thm}{Theorem}
\newtheorem{cor}{Corrolary}
\newtheorem{ass}{Assumption}
\newtheorem{question}{Question}

\newtheorem{lemma}{Lemma}
\newtheorem{fact}{Fact}

\newcommand*{\alex}{\textcolor{red}}
\newcommand*{\as}{\textcolor{blue}}
\newcommand{\jucom}[1]{\textcolor{red}{[#1]}}
\newcommand{\ascom}[1]{\textcolor{blue}{[#1]}}

\twocolumn[
\icmltitle{Discovering Context Effects from Raw Choice Data}



\icmlsetsymbol{equal}{*}

\begin{icmlauthorlist}
\icmlauthor{Arjun Seshadri}{a}
\icmlauthor{Alexander Peysakhovich}{b}
\icmlauthor{Johan Ugander}{a}
\end{icmlauthorlist}

\icmlaffiliation{a}{Stanford University, Stanford, CA}
\icmlaffiliation{b}{Facebook Artificial Intelligence Research, New York, NY}

\icmlcorrespondingauthor{Arjun Seshadri}{aseshadr@stanford.edu}
\icmlcorrespondingauthor{Alexander Peysakhovich}{alexpeys@fb.com}
\icmlcorrespondingauthor{Johan Ugander}{jugander@stanford.edu}

\icmlkeywords{}

\vskip 0.3in
]



\printAffiliationsAndNotice{}  

%

\begin{abstract}
Many applications in preference learning assume that decisions come from the maximization of a stable utility function. Yet a large experimental literature shows that individual choices and judgements can be affected by ``irrelevant'' aspects of the context in which they are made. An important class of such contexts is the composition of the choice set. In this work, our goal is to discover such choice set effects from raw choice data. We introduce an extension of the Multinomial Logit (MNL) model, called the context dependent random utility model (CDM), which allows for a particular class of choice set effects. We show that the CDM can be thought of as a second-order approximation to a general choice system, can be inferred optimally using maximum likelihood and, importantly, is easily interpretable. We apply the CDM to both real and simulated choice data to perform principled exploratory analyses for the presence of choice set effects.
\end{abstract}

\section{Introduction}

Modeling individual choice is an important component of recommender systems \cite{resnick1997recommender}, search engine ranking  \cite{schapire1998learning}, analysis of auctions \cite{athey2001information}, marketing \cite{allenby1998marketing}, and demand modeling in diverse domains \cite{berry1995automobile,bruch2016extracting}. The workhorse models used either implicitly or explicitly in these disparate literatures are random utility models (RUMs) \cite{manski1977structure}, which assume that individuals have a numeric utility for each item and that they make choices that maximize noisy observations of these utilities \cite{luce1959individual,mcfadden1980econometric,kreps1988notes}. 

The most well known and widely used RUM is the conditional multinomial logit (MNL), also called the Luce model, which is the unique RUM that satisfies the axiom known as the {\it independence of irrelevant alternatives (IIA)} \cite{luce1959individual}. Informally, this axiom states that adding an item to a choice set does not change the relative probabilities of choosing the other items. This assumption is very strong, but it allows analysts to build powerful and interpretable models. However, if one assumes IIA but it is not actually true, predictions for out-of-sample choices could be very wrong. Thus, it is important for analysts to discover whether IIA is approximately true in a given dataset.

At the same time, there is a large amount of experimental evidence showing significant deviations from rational choice across many domains.  In particular, the value assigned to an item can strongly depend on the ``irrelevant'' elements of the context of the choice \cite{tversky1972elimination,tversky1993context}. Attempts to model these context effects in a domain-free manner fall short of being practically valuable, either due to large parameter requirements, inferential intractability, or both \cite{park2013theoretical}. 

Our contribution addresses both issues. We consider the IIA-satisfying MNL model and make small modifications to subsume a class of IIA violations that we believe are important in practice, while retaining parametric and inferential efficiency. We refer to this model as the context dependent random utility model (CDM). The CDM can be thought of as a ``second order'' approximation of a general choice system (the MNL model, meanwhile, corresponds to a ``first order'' approximation). Because the CDM nests MNL, it can fit data that does satisfy IIA just as well, and, importantly, can be used to construct a nested-model hypothesis test for whether a particular dataset is consistent with IIA.

The key assumption of the CDM is that IIA violations come from pairwise interactions between items in the choice set and that larger choice set effects can be approximated additively using all pairwise effects. This assumption means that the CDM has many fewer parameters than a general choice system. We can further reduce the CDM's data dependence by assuming that these underlying effects can be well modeled by latent vectors of a smaller dimensionality than the number of items, resulting in what we call the low-rank CDM. The low-rank CDM can be useful in applications where the number of items is relatively large and where seeing all possible comparisons may be extremely costly.

For a theoretical contribution, we furnish formal results for conditions under which the parameters of a CDM can or can not be identified from data. In situations where identifiability is not achieved, we advocate for additive $\ell_2$-regularization of the log-likelihood to select the minimum norm solution. We also provide finite sample convergence guarantees for the expected squared $\ell_2$ error of the estimate as a function of comparison structure. 

For an applied contribution, we first test the CDM in synthetic data and show that a nested model likelihood ratio test---between the CDM and MNL models---has good finite sample properties. When IIA holds, a $p < .05$ hypothesis test rejects the null slightly less than $5\%$ of the time. When IIA does not hold the null hypothesis is overwhelmingly rejected even in medium size data-sets. By contrast we see that using a nested model test based on the nested structure of a general choice system and MNL model gives a test that wildly over-rejects the null, even when IIA is true.

We apply the CDM to several real-world datasets. First, we show that we can strongly reject IIA in the popular SFWork and SFShop choice datasets. Second, we consider using the CDM to model choices in the task of \citet{heikinheimo2013crowd}, where individuals are presented with triplets of items and asked which item is least like the other two. Here the CDM can capture the underlying choice structure quite well while IIA is an extremely unreasonable assumption.

\subsection{Related Work}

The CDM vaguely resembles the continuous bag of words (CBOW) neural network architecture popularized by word2vec \cite{mikolov2013distributed}, with two key differences. First, while the CBOW model tries to predict the appearance of a word where candidates are any word in the vocabulary, the CDM models choices from arbitrary subsets. Second, although in principle the word2vec model and its extensions train two embeddings per word (one as target and one as context), these embeddings are typically averaged together at the end of training to obtain a single embedding per word. However, it has been shown that keeping these two embeddings separate does allow one to capture ancillary information not captured by the single embedding \cite{rudolph2016exponential} and the two embeddings can be used to model complements and substitutes in supermarket shopping data \cite{ruiz2017shopper}.

Utility models with a contextual component are widely used to analyze intertemporal choice (discount functions) \cite{muraven2000self, fudenberg2012timing}, choice under uncertainty \cite{bordalo2012salience, fox1995ambiguity}, and choices about cooperation \cite{list2007interpretation, liberman2004name, peysakhovich2015habits}. 
They are also workhorses in modeling consumer behavior in applied settings. 
Online recommender systems \cite{resnick1997recommender}, which model user-item interactions as inner products of low rank vectors, can be seen as employing a utility function that is an inner product between item attributes and user weights.

The CDM and low-rank CDM generalize a number of prominent choice models in a unified framework. In a later section, after we introduce the basic mathematical notation, we present connections to the work of \citet{tversky1993context}, \citet{batsell1985new}, and \citet{chen2016modeling,chen2016predicting}. Importantly, this means our convergence and identifiability results carry over to these other models, which all previously lacked such results.

\section{Modeling Choice Systems}
\label{sec:choicesystem}

Let $\mathcal{X}$ be a finite set of $n$ alternatives that we hold fixed and let $\allsubsets = \{C : C \subseteq \mathcal{X}, |C| \geq 2\}$ be the set of all subsets of $\mathcal X$ of size greater than or equal to two. Throughout this work, we assume there is a single individual that is presented with choice sets and chooses a single item from each choice set. In this setting, the fundamental object of study is a \textit{choice system}, a collection of probability distributions for every $C \in \allsubsets$, describing the probability that an item $x$ is chosen from $C, \forall x \in C$. We denote each such probability by $P(x \mid C)$. In general, a choice system can model arbitrary preferences on arbitrary subsets with no further restrictions.

The most commonly assumed restriction on choice systems is that they satisfy the independence of irrelevant alternatives (IIA), which can be stated as follows.
\begin{ass}
A choice system on $\mathcal X$ satisfies the independence of irrelevant alternatives (IIA) if for any $x, y \in \mathcal{X}$ and choice sets $A, B  \subseteq \mathcal X$ with $x, y \in A, B$ we have 
\[\frac{P(x \mid A)}{P (y \mid A)} = \frac{P(x \mid B)}{P(y \mid B)}.\]
\end{ass}
In other words, IIA states that the composition of a choice set does not affect the relative attractiveness of items. 
A main question in this work will be, given a dataset $\mathcal{D}$ of choices from choice sets, 
can we determine whether $\mathcal{D}$ was generated by a model satisfying IIA or a model of a more general choice system? If not generated by a model satsify IIA, is it possible to define tractable model classes between the class of models satisfying IIA and the class of fully general choice systems? Our answer to this question is a formal truncation of a general choice system that we call the {\it context dependent random utility model} (CDM).

\subsection{Context-dependent Random Utility Models}
A trivial model of a general choice system is the {\it universal logit model} \cite{mcfadden1977application}, which simply parameterizes the choice system object, defining utilities $u(x \mid C), \forall x \in C$, for each $C \in \allsubsets$ that can vary arbitrarily for every item, for every set. The choice probabilities for a universal logit model are then:
\begin{align*}
    P(x \mid C) = \frac{\exp(u(x \mid C))}{\sum_{y \in C} \exp(u(y \mid C))}.
\end{align*}
The above model exhibits scale-invariance on each subset $C$, and thus we require that $\sum_{y \in C} u(y \mid C) = 0$, $\forall C \in \allsubsets$ for the purposes of identifiability. While relatively uninteresting as a model, the above formulation is the starting point for the following observation about choice systems, first documented by \citet{batsell1985new}.
\begin{lemma}
\label{lemma:bp}
The utilities in the universal logit model, $u(x \mid C)$, $\forall C \in \allsubsets, \forall x \in C$, can be uniquely mapped as
$$u(x \mid C) = \sum_{B \subseteq {C \setminus x}} v(x \mid B),$$ 
where $v(x \mid B)$ are values that satisfy the constraints $\sum_{x \notin B}  v(x \mid B) = 0$, $\forall B \subset \mathcal{X}$.
\end{lemma}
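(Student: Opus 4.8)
The plan is to read the asserted identity, for each fixed item $x$, as the statement that $v(x\mid\cdot)$ is the M\"obius (difference) transform of $u(x\mid\cdot)$ on the Boolean lattice of subsets of $\mathcal{X}$ containing $x$. The M\"obius inversion formula on the subset lattice says that, extending $u$ to the singleton so that $v(x\mid\emptyset)=u(x\mid\{x\})$, the unique function satisfying $u(x\mid C)=\sum_{B\subseteq C\setminus x}v(x\mid B)$ for all $C\ni x$ is $v(x\mid B)=\sum_{A\subseteq B}(-1)^{|B|-|A|}u(x\mid A\cup\{x\})$; so on each ``$x$-slice'' the passage $u\leftrightarrow v$ is automatically bijective. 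The real content of the lemma is the normalization bookkeeping: the universal-logit utilities are determined only up to a per-set additive shift $u(x\mid C)\mapsto u(x\mid C)+t_C$, and the singleton values $u(x\mid\{x\})$ (equivalently the $v(x\mid\emptyset)$) are not pinned down by the choice probabilities at all, so one must verify that imposing $\sum_{x\notin B}v(x\mid B)=0$ for every $B\subsetneq\mathcal{X}$ removes exactly this residual freedom. Accordingly, I would state the claim in the clean form: $v\mapsto P$ with $P(x\mid C)\propto\exp\!\big(\sum_{B\subseteq C\setminus x}v(x\mid B)\big)$ is a bijection from the set of $v$'s obeying those constraints onto the set of all choice systems on $\mathcal{X}$.

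First I would match dimensions. The space of choice systems has dimension $\sum_{C\in\allsubsets}(|C|-1)=n2^{n-1}-2^n+1$, while the $v$'s lie in an $n2^{n-1}$-dimensional space cut out by the $2^n-1$ constraints indexed by the proper subsets $B\subsetneq\mathcal{X}$ --- visibly independent, since the constraint for $B$ involves only the variables $v(\cdot\mid B)$ --- hence dimension $n2^{n-1}-(2^n-1)=n2^{n-1}-2^n+1$; the two agree. The map $v\mapsto P$ factors as $v\mapsto\big(u(\cdot\mid C)\big)_{C\in\allsubsets}\mapsto\big(P(\cdot\mid C)\big)_{C\in\allsubsets}$, where the first arrow $\bar\Phi$ is linear into $\prod_{C\in\allsubsets}\big(\mathbb{R}^{C}/\mathbb{R}\mathbf{1}\big)$ and the second is the coordinatewise softmax, a bijection onto the product of open simplices. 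So it suffices to show $\bar\Phi$ is injective on the constraint space; being linear between spaces of equal finite dimension, injectivity then forces surjectivity, and the M\"obius formula above supplies the inverse once a representative of $u$ is fixed.

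The crux is the implication: if $v$ satisfies the constraints and $\phi_C:=\sum_{B\subseteq C\setminus x}v(x\mid B)$ does not depend on the chosen $x\in C$, for every $C\in\allsubsets$, then $v\equiv 0$. My route: sum the defining relation over $x\in C$ and reorganize the double sum by $B$ to get $|C|\,\phi_C=\sum_{B\subseteq C}\sum_{x\in C\setminus B}v(x\mid B)$; since for $B\subseteq C$ the set $\mathcal{X}\setminus B$ splits as $(C\setminus B)\sqcup(\mathcal{X}\setminus C)$, the constraint $\sum_{x\notin B}v(x\mid B)=0$ lets me rewrite $\sum_{x\in C\setminus B}v(x\mid B)=-\sum_{x\notin C}v(x\mid B)$; noting $\sum_{B\subseteq C}v(x\mid B)=\phi_{C\cup\{x\}}$ for $x\notin C$, this collapses to the recursion
\[
|C|\,\phi_C\;=\;-\sum_{x\in\mathcal{X}\setminus C}\phi_{C\cup\{x\}},\qquad C\in\allsubsets.
\]
Solving downward from $C=\mathcal{X}$ forces $\phi_C=0$ for all $C$. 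Then, for each fixed $x$, every down-set partial sum $\sum_{B\subseteq T}v(x\mid B)$ with $T\neq\emptyset$ vanishes, and M\"obius-inverting these trivial sums yields $v(x\mid B)=(-1)^{|B|}v(x\mid\emptyset)$; finally the constraint with $B=\mathcal{X}\setminus\{x_0\}$ reads $(-1)^{n-1}v(x_0\mid\emptyset)=0$, so $v(x_0\mid\emptyset)=0$ for every $x_0$ and hence $v\equiv 0$. I expect spotting and deriving the recursion $|C|\,\phi_C=-\sum_{x\notin C}\phi_{C\cup\{x\}}$ --- in particular getting the double-sum reorganization and the sign from the constraint right --- to be the main obstacle; the dimension count, M\"obius inversion, and softmax steps around it are routine.
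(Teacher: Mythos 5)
Your proposal is correct, but there is nothing in the paper to compare it against: the paper states Lemma~\ref{lemma:bp} as an observation ``first documented by Batsell and Polking'' and never proves it (the appendix only proves the identifiability and convergence theorems), so your argument is a self-contained proof of a claim the authors deferred to a citation. The structure is sound: the constraint for each $B$ involves only the variables $v(\cdot\mid B)$, so the $2^n-1$ constraints are independent and the constrained $v$-space has dimension $n2^{n-1}-(2^n-1)$, matching $\sum_{C\in\allsubsets}(|C|-1)$; injectivity of the linear map into $\prod_C(\mathbb{R}^C/\mathbb{R}\mathbf{1})$ then suffices, and your recursion $|C|\,\phi_C=-\sum_{x\notin C}\phi_{C\cup\{x\}}$ (which I verified, including the sign coming from splitting $\mathcal{X}\setminus B$ as $(C\setminus B)\sqcup(\mathcal{X}\setminus C)$) kills the kernel by downward induction from $C=\mathcal{X}$, after which M\"obius inversion of the vanishing down-set sums plus the constraint at $B=\mathcal{X}\setminus\{x_0\}$ forces $v\equiv 0$. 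You are also right to flag the normalization issue: under the paper's convention $\sum_{y\in C}u(y\mid C)=0$ the displayed identity can only hold up to a per-set additive constant (e.g.\ for $n=3$, $C=\{a,b\}$, the sum $\sum_{x\in C}\sum_{B\subseteq C\setminus x}v(x\mid B)=-v(c)-v(c\mid\{a\})-v(c\mid\{b\})$ need not vanish), so restating the lemma as a bijection between constrained $v$'s and choice systems, as you do, is the correct reading and slightly sharper than the paper's phrasing.
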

For greater clarity, we expand out the terms individually.
\begin{align*}
    u(x \mid C) = 
    &\underbrace{v(x)}_{\text{1st order}} + 
    \underbrace{\sum_{y \in C \setminus x} v(x \mid \{y\})}_{\text{2nd order}} +  \\
    & \underbrace{\sum_{\{y,z\} \subseteq C \setminus x} v(x \mid \{y,z\})}_{\text{3rd order}} +
     \ldots + \underbrace{v(x \mid C \setminus \{ x \} )}_{|C|\text{th order}},
\end{align*}
and expand out the first three sets of constraints:
\begin{align*}
\sum_{x \in \mathcal{X}} v(x) = 0,
\ \ 
\sum_{x \in \mathcal{X} \setminus y} v(x \mid \{y\}) = 0,
    \\
\sum_{x \in \mathcal{X} \setminus \{y,z\}} v(x \mid \{y,z\}) = 0 .
\hspace{1cm}
\end{align*}
We use $v(x) = v(x \mid \emptyset)$ for simplicity. 
This expansion reveals that arbitrary contextual utilities can be decomposed into intuitive contributions: the first order terms represent the item's intrinsic contribution to the utility, the second order terms represent the contextual contributions from every other item in the choice set, the third order terms the contributions from contextual pairs not modeled by contributions of the pairs constituent items, and so on. The expansion invites one to consider a hierarchy of choice model classes indexed by their order\footnote{This expansion differs from the one used by \citet{batsell1985new}, which expands the log probability ratios of items being chosen instead of the underlying contextual utilities.}: the $p$th order model refers to forcing all terms of order greater than $p$ to zero. Denote this class of choice system models by $\mathcal{M}_p$. Clearly, we have $\mathcal{M}_1 \subset \mathcal{M}_2 \subset \ldots \subset \mathcal{M}_{n-1}$, where $\mathcal{M}_{n-1}$ is the universal logit model. We next consider two excercises. First, we write out the $1$st order model explicitly:
\begin{align*}
    P(x \mid C) = \frac{\exp(v(x))}{\sum_{y \in C} \exp(v(y))}, \ \ \ \sum_{x \in \mathcal{X}} v(x) = 0.
\end{align*}
This model $\mathcal M_1$ is the multinomial logit model, the workhorse model of discrete choice \cite{luce1959individual,mcfadden1980econometric}. Moving to higher-order models, a counting exercise reveals that the number of free parameters in the $p$th order model is 
\begin{align*}
   \sum_{q=1}^p\binom{n}{q-1}(n-q),
\end{align*}
which simplifies to $n-1$ and $(n-2)2^{n-1}+1$ parameters for $\mathcal M_1$ (MNL) and $\mathcal M_{n-1}$ (universal logit), respectively. Clearly, the parameters grow polynomially in the number of items and exponentially in the order: there are $O(n)$ parameters for the 1st order model, $O(n^2)$ parameters for the 2nd, and so on.

The principled next step, then, is to consider the minimal model class that accounts for context effects, $\mathcal{M}_2$: 
\begin{align*}
    P(x \mid C) = \frac{\exp(v(x) + \sum_{z \in C \setminus x} v(x \mid \{z\}) )}{\sum_{y \in C} \exp(v(y) + \sum_{z \in C \setminus y} v(y \mid \{z\}) )},
    \\
    \text{s.t. } \sum_{x \in \mathcal{X}} v(x) = 0, \ \ \sum_{x \in \mathcal{X} \setminus y} v(x \mid \{y\}) = 0, \ \ \forall y \in \mathcal{X}.
\end{align*}
We remove most constraints (see Appendix \ref{section:constraints} for details) by introducing new parameters 
$u_{xz} = v(x \mid \{z\}) - v(z)$, $\forall x,z \in \mathcal X$, 
interpretable as the pairwise push and pull of $z$ on $x$'s utility. 
We may then rewrite the above, $\forall C \subseteq \mathcal X, \forall x \in C,$ as
\begin{align}
\label{eqn:cdm}
    P(x \mid C) = \frac{\exp(\sum_{z \in C \setminus x} u_{xz})}{\sum_{y \in C} \exp(\sum_{z \in C \setminus y} u_{yz})},
\end{align}
with one constraint, $\sum_{x \in \mathcal{X}} \sum_{y \in \mathcal{X} \setminus x} u_{xy} = 0$. We may then do away with this final constraint by noticing that $P(x \mid C)$ is invariant to a constant shift of $u_{xz}$, as in the case of both the MNL and universal logit model.
We refer to the model $\mathcal M_2$, as parameterized in equation~\eqref{eqn:cdm}, as the {\it context dependent random utility model (CDM)}, and note that it has $n(n-1)-1$ free parameters.

The CDM then corresponds to the following restriction on choice systems. 
\begin{ass}[Pairwise linear dependence of irrelevant alternatives]
\label{ass:pairiia}
A choice system on $\mathcal X$ satisfies pairwise linear dependence of irrelevant alternatives if, in the universal logit representation of Lemma~\ref{lemma:bp}, $v(x \mid B) = 0$ for all $B \subset \mathcal X$ for which $|B| \ge 2$.
\end{ass} 

This assumption can either be taken literally,
or can be justified as an approximation on the grounds of applications: in practice many problems are concerned with choices from relatively small sets, and the linear context effect assumption is then a decent approximation.

\subsection{Low-rank CDMs}
From equation~\eqref{eqn:cdm}, it is clear that the parameters of the CDM, $u_{xz}$, $\forall x,z \in \mathcal X$, have a matrix-like structure. Note that the parameters do not quite form a matrix, as the diagonal elements $u_{xx}$ are undefined and unused. But given this structure, it is natural ask if the pairwise contextual utilities can be modeled by a lower-dimensional parameterization. 

Formally, we define the low-rank CDM as a CDM where the pairwise contextual utilities jointly admit a low-rank factorization 
$u_{xz} =  c_z^T t_x, \forall x,y \in \mathcal X.$
We call $t_x, c_x \in \mathbb R^r$, the {\it target} and {\it context} vectors, respectively, for each item $x \in \mathcal X$. We can then write the choice probabilities of the low-rank CDM, for all  $C \subseteq \mathcal X$, for all $x \in C$, as:
\begin{align}
\label{eqn:lowrankcdm}
    P(x \mid C) = \frac{\exp((\sum_{z \in C \setminus x} c_z)^T t_x )}{\sum_{y \in C} \exp((\sum_{z \in C \setminus y} c_z)^T t_y )}.
\end{align}
The rank-$r$ CDM then has $2nr$ parameters and has at most $\min \lbrace (2n-r)r, n(n-1) - 1\rbrace$ degrees of freedom.

Our low-rank assumption is strongly related to standard additive utility models where one is given a low-dimensional featurization $x \in \mathbb R^r$ of each item and an individual's utility is $\beta^T x$. A difference here, other than the notion of contextual utility, is that we assume no featurization is available and that it must be learned. 

\section{Identifiability and Estimation of the CDM}

Consider a dataset $\mathcal{D}$ of choices with generic element $(x, C)$ that correspond to observing element $x$ being chosen from set $C$. Let $\uniqueD$ denote the collection of {\it unique} subsets of $\mathcal{X}$ represented in $\mathcal{D}$. If we assume that the data was generated by a CDM, it is important to understand conditions under which the parameters of that CDM are identifiable and conditions under which the expected error of a tractable estimation procedure converges to zero as the dataset gets large. In this section we furnish two sufficient conditions and one ``insufficient'' condition for identifiability. We then bound the expected squared $\ell_2$ error of the maximum likelihood estimate (MLE) of a full-rank CDM. Because the log-likelihood of the full-rank CDM is convex (by the convexity of log-sum-exp), we know we can efficiently find this MLE, and that this bound on the error of the full-rank model also bounds the error of any low-rank model.

We consider the dataset $\mathcal{D}$ as being generated in the following hierarchical manner:
\begin{enumerate}
\item A choice set $A$ is chosen at random from a distribution on the set of all subsets of $\mathcal X$.
\item The chooser chooses an item $x$ from the choice set $A$ according to a CDM with parameters $\theta \in \Theta$.
\end{enumerate}

We can parametrize the utility function by $\theta$ referring to it as $u_{\theta}(\cdot \mid \cdot).$ Given a $\mathcal{D}$ and guess $\theta$ we can write the probability of $(x, A)$ as 
$$
P_{\theta} (x \mid A) = \frac{\exp(u_{\theta} (x \mid A))}{\sum_{y \in A} \exp(u_{\theta} (y \mid A))}.
$$
This means we have a well defined likelihood function for the full dataset 
\begin{align}
\label{eqn:cdmlike}
\mathcal{L} (\mathcal{D} \mid \theta) = \prod_{(x , A) \in \mathcal{D}} P_{\theta} (x \mid A).
\end{align} 

For now we consider a full rank CDM where the parameter vector $\theta$ is the set of pairwise contextual utilities $u_{xz}$, $\forall x, z \in \mathcal X$. We will consider $u \in \mathbb R^d$ as the parameter vector, where for the full-rank CDM $d=n(n-1)-1$. Because $u$ can only be identified up to a scale, we consider possible CDMs with the constraint that $\sum_{xz} u_{xz} = 0$.

The likelihood \eqref{eqn:cdmlike} can be maximized using standard techniques. 
We will say that a dataset {\it identifies} a CDM if there are no two sets of parameters that have the same distribution $P(x \mid C), \forall C \in \allsubsets, \forall x \in C$.
We now give a sufficient (but not necessary) condition for identification.
\begin{thm}
\label{thm:two_sets_good}
A CDM is identifiable from a dataset $\mathcal{D}$ if $\uniqueD$ contains comparisons over all choice sets of two sizes $k, k'$, where at least one of $k,k'$ is not $2$ or $n$.
\end{thm}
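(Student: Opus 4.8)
The plan is to first reduce identifiability to a statement about the difference of two candidate parameter vectors, and then to use the combinatorial abundance of having \emph{all} subsets of two different sizes to force that difference to vanish. Throughout, WLOG let $k$ be the size that is not $2$ or $n$, so $3 \le k \le n-1$; note this forces $n \ge 4$, and if $n \le 3$ the hypothesis is vacuous. We also take $k \ne k'$, as ``two sizes'' requires.

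\textbf{Reduction.} For a candidate parameter vector $u$ write $f_u(x,C) := \sum_{z \in C\setminus x} u_{xz}$, so that $\log\frac{P_u(x\mid C)}{P_u(y\mid C)} = f_u(x,C) - f_u(y,C)$. Since positive probabilities summing to one are determined by their pairwise ratios, two parameter vectors $u,u'$ induce the same choice distribution on a set $C$ if and only if, writing $\delta_{xz} := u_{xz}-u'_{xz}$ and $f(x,C):=\sum_{z\in C\setminus x}\delta_{xz}$, the value $f(x,C)$ does not depend on $x\in C$; call this common value $h(C)$. Hence the theorem is equivalent to: if $f(\cdot,C)$ is constant on $C$ for every $C$ with $|C|=k$ and every $C$ with $|C|=k'$, then $\delta_{xz}$ is a constant independent of $(x,z)$ --- which is exactly the statement that $u$ and $u'$ coincide up to the overall additive shift, i.e.\ they are the same CDM (equivalently $\delta=0$ under $\sum_{xz}u_{xz}=0$).

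\textbf{Step 1: columns agree up to a potential.} Fix a set $C$ with $|C|=k$ and two items $x,y\in C$, and let $S = C\setminus\{x,y\}$ (so $|S|=k-2\ge 1$). Constancy of $f$ on $C$ gives $\delta_{xy}-\delta_{yx} = \sum_{z\in S}(\delta_{yz}-\delta_{xz})$. The left-hand side is fixed while $S$ ranges over all $(k-2)$-subsets of $\mathcal{X}\setminus\{x,y\}$; since $3\le k\le n-1$, there are at least two such subsets and they are connected by single-element swaps, and comparing the equations for two swap-adjacent subsets forces $\delta_{xz}-\delta_{yz}$ to be the same for every $z\in\mathcal{X}\setminus\{x,y\}$. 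Denote this common value $\psi(x,y)$. Choosing a third element $z\notin\{x,y,w\}$ (available since $n\ge 4$) shows $\psi(x,y)=-\psi(y,x)$ and $\psi(x,y)+\psi(y,w)=\psi(x,w)$, so $\psi(x,y)=p(x)-p(y)$ for some $p:\mathcal{X}\to\mathbb{R}$ (fix a reference item, set its value to $0$, and integrate). Equivalently $\delta_{xz}=p(x)+q(z)$ for all $x\ne z$, where $q(z):=\delta_{xz}-p(x)$ is well defined.

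\textbf{Step 2: fixing the potential with a second size.} Substituting $\delta_{xz}=p(x)+q(z)$ gives $f(x,C) = (|C|-1)p(x) + \sum_{z\in C}q(z) - q(x)$, so constancy of $f$ on a set $C$ is equivalent to $(|C|-1)p(x)-q(x)$ being constant over $x\in C$. Because every pair of items lies in a common $k$-set and in a common $k'$-set, we conclude that $(k-1)p(x)-q(x)=\alpha$ and $(k'-1)p(x)-q(x)=\beta$ for all $x\in\mathcal{X}$, for some constants $\alpha,\beta$. Subtracting, $(k-k')p(x)=\alpha-\beta$ for all $x$; since $k\ne k'$, $p$ is constant, hence so is $q$, hence $\delta_{xz}=p(x)+q(z)$ is constant in $(x,z)$, which is the desired conclusion.

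\textbf{Main obstacle.} The crux is Step 1 --- showing $\delta_{xz}-\delta_{yz}$ is independent of $z$ --- and this is exactly where the restriction $k\notin\{2,n\}$ is used: for $k=2$ the sets $S$ are empty and one learns only that $\delta$ is symmetric, while for $k=n$ there is a single set and no swap is available. The care needed is in the bookkeeping of which subset sizes admit the single-element swap and in the degenerate small-$n$ cases; once Step 1 is in hand, the rest is routine linear algebra, with the only real point being to verify that a \emph{second} distinct size is both necessary and (regardless of whether $k'\in\{2,n\}$) sufficient to eliminate the residual potential $p$.
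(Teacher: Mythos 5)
Your proof is correct and follows essentially the same route as the paper's: the heart of both arguments is the single-element-swap identity within the size-$k$ sets (which pins down $u_{xw}-u_{yw}$ as a function of the third item $w$ up to one additive constant per pair $x,y$), followed by using the second set size to resolve that remaining ambiguity. The difference is purely presentational --- you characterize the null space directly (showing the difference $\delta$ of two candidate solutions must be a global constant via the decomposition $\delta_{xz}=p(x)+q(z)$), whereas the paper solves explicitly for the parameter differences from the observed log-probability ratios using nested sets of the two sizes and then invokes its Facts 1 and 2; the combinatorics and the role of the hypothesis $k\notin\{2,n\}$ are identical.
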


The proof is given in Appendix~\ref{app:id}. In the multinomial logit model, the constraints of IIA allow us to identify all parameters given just the probability distribution $P(\cdot \mid \mathcal{X})$, but in the less constrained CDM more information is needed. 
For a simple demonstration of the theorem, consider a choice system on $\mathcal{X} = \lbrace a, b, c \rbrace$ where
\begin{align*}
P(a \mid \mathcal{X}) = 0.8, \ \ 
P(b \mid \mathcal{X}) = 0.1, \ \ 
P(c \mid \mathcal{X}) = 0.1. 
\end{align*}
Here, if we assume IIA we can infer any pairwise choice probability simply by taking the appropriate ratio. However, if we do not assume IIA and only assume Assumption~\ref{ass:pairiia} (equivalent to assuming the choice system is a CDM), \textit{any} set of pairwise probabilities is consistent with what we've observed. Thus the CDM is not identified if we only receive data about choices from $\{a,b,c\}$. Moreover, because CDM can fit any pairwise probability while retaining the above choice probabilities over $\{a,b,c\}$, the CDM clearly violates IIA. We elaborate further in Appendix \ref{section:examples}, showing specific CDM parameters handling violations of IIA, and further discuss the various context effects (e.g. the compromise effect) the CDM can discover and accommodate.

In Appendix~\ref{app:id} we show that the identifiability of the full-rank CDM for a given dataset $\mathcal{D}$ is equivalent to testing the rank of an integer design matrix $G(\mathcal{D})$ constructed from the dataset (Theorem~\ref{thm:rank_test}). This characterization of the identifiability of the full-rank CDM also gives a sufficient condition for the identifiability of low-rank CDMs. The proof of this theorem can be easily expanded to demonstrate an advantage of the CDM instead of a general choice system: for a general choice system to be identified $\uniqueD$ would need to include in its support \textit{every} choice set.

In addition to the above sufficient conditions for identifiability, we also have the following result about an important ``insufficient'' condition.

\begin{thm}\label{thm:single_set_bad}
No rank $r$ CDM, $1 \le r \le n$, is identifiable from a dataset $\mathcal D$ if $\uniqueD$ contains only choices from sets of a single size.
\end{thm}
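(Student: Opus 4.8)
The plan is to exhibit an explicit symmetry in the CDM parametrization that choices from a single set size cannot break. Suppose every set in $\uniqueD$ has the same size $k$ with $2\le k\le n$, and assume $n\ge 3$ (for $n=2$ one has $\uniqueD=\{\mathcal X\}$ and the statement reduces to the well-known non-uniqueness of rank-$r$ factorizations; we skip it). The first step is the standard softmax observation that $P(x\mid C)$ in \eqref{eqn:cdm} depends on the parameters only through the utilities $u(x\mid C)=\sum_{z\in C\setminus x}u_{xz}$, and only up to an additive constant common to all $x\in C$; hence two parameter settings induce the same choice probabilities on every size-$k$ set if and only if, for each such $C$, the maps $x\mapsto u(x\mid C)$ and $x\mapsto u'(x\mid C)$ differ by a constant on $C$. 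The goal is therefore to produce, for each $1\le r\le n$, two rank-$r$ CDM parameter vectors that differ on $\uniqueD$ only by these per-set constants yet are genuinely distinct, i.e.\ not related by the global additive shift of the $u_{xz}$ that the model already quotients out.

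The heart of the argument is a gauge freedom. For any $a\in\mathbb R^n$ put $\delta_{xz}=a_x+(k-1)a_z$. Then for every $C$ with $|C|=k$ and every $x\in C$,
\[
\sum_{z\in C\setminus x}\delta_{xz} = (k-1)a_x + (k-1)\left(\sum_{z\in C}a_z - a_x\right) = (k-1)\sum_{z\in C}a_z,
\]
which is independent of $x$. Hence adding $\delta$ to the parameters of \emph{any} CDM shifts every size-$k$ utility by a per-set constant and leaves all size-$k$ choice probabilities unchanged; and when $a$ is non-constant the increment $\delta$ is itself a non-constant function of $(x,z)$ --- for $n\ge 3$, $a_x+(k-1)a_z$ is constant only if $a$ is --- so the perturbation is not the trivial shift. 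This already establishes non-identifiability of the full-rank CDM.

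To cover every rank $r$ I instantiate this freedom inside the rank-one family. Fix a non-constant $t\in\mathbb R^n$ and any $\tau\ne 0$ and compare the rank-one CDMs
\[
u_{xz}=t_x t_z,\qquad u'_{xz}=(t_x+\tau)\left(t_z+\tfrac{\tau}{k-1}\right),
\]
each realizable with target and context vectors in $\mathbb R^r$ for every $1\le r\le n$ (pad with zeros). Expanding gives $u'_{xz}-u_{xz}=\tfrac{\tau}{k-1}t_x+\tau t_z+\tfrac{\tau^2}{k-1}$, which is of the gauge form $\delta_{xz}$ above (take $a_x=\tfrac{\tau}{k-1}t_x$) up to an additive constant, so $u$ and $u'$ agree on every size-$k$ set and hence on all of $\uniqueD$, while $u'-u$ is non-constant because $t$ is. Thus no rank-$r$ CDM is identifiable from $\mathcal D$. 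The step I expect to be the real obstacle is exactly this last bookkeeping --- certifying that the perturbation is not absorbed by the model's shift symmetry (and, for the low-rank variant, that it remains a bona fide rank-$r$ CDM) --- which is precisely where the hypothesis $n\ge 3$ is used and why the proof cannot be shortened. It is also worth remarking, to connect with Theorem~\ref{thm:two_sets_good}, where a second set size $k'\ne 2,n$ destroys the freedom: the analogous gauge transformations for sizes $k$ and $k'$ weight the context index by $k-1$ and $k'-1$ respectively, and a transformation preserving probabilities at both sizes forces $a$ to be constant.
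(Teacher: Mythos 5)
Your proof is correct and takes a genuinely different, and in the full-rank case cleaner, route than the paper's. The paper splits into three cases by set size ($s=2$, $s=n$, $3 \le s \le n-1$) and exhibits a different null-space perturbation for each; your single gauge $\delta_{xz}=a_x+(k-1)a_z$ handles every $k$ uniformly, your verification that $\sum_{z\in C\setminus x}\delta_{xz}=(k-1)\sum_{z\in C}a_z$ is independent of $x\in C$ is exactly right, and so is the observation that for $n\ge 3$ non-constancy of $a$ makes $\delta$ a perturbation not absorbed by the model's global shift. (Your gauge is in fact arguably sounder than the paper's treatment of the case $3\le s\le n-1$: the rank-one perturbation $a\mathbf{1}^T$ claimed there changes $u(x\mid C)=\sum_{z\in C\setminus x}u_{xz}$ by $(s-1)a_x$, which is not constant over $x\in C$ for non-constant $a$, whereas the symmetrized combination $a\mathbf{1}^T+(k-1)\mathbf{1}a^T$ you use is what genuinely lies in the null space.) The one substantive difference is in the low-rank half. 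The paper constructs, for \emph{every} rank-$r$ pair $(T,C)$, a distinct rank-$r$ twin with identical probabilities---this universal version is what underwrites the downstream claim that fitted Blade-Chest/CDM parameters cannot be interpreted without regularization. You instead exhibit a single confusable pair of rank-one models, $u_{xz}=t_xt_z$ versus $(t_x+\tau)(t_z+\tau/(k-1))$, which certifies that the dataset fails to identify the rank-$r$ model class; this matches the paper's stated definition of ``identifies'' and therefore proves the theorem as written, but it is weaker than the paper's universal perturbation. Note that your $\delta=a\mathbf{1}^T+(k-1)\mathbf{1}a^T$ is generically rank two, so adding it to an arbitrary rank-$r$ factorization need not stay in the rank-$r$ class; upgrading your argument to arbitrary $(T,C)$ would require choosing the gauge direction compatibly with $\mathrm{range}(T)$ and $\mathrm{range}(C)$, which is roughly the extra bookkeeping the paper's case analysis performs. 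Your closing remark on why a second set size $k'\neq k$ destroys the gauge is a nice consistency check against Theorem~\ref{thm:two_sets_good}.
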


The proof is given in Appendix~\ref{app:id}. 
Requiring comparisons over two different choice set sizes is not unique to the CDM; recent results \cite{chierichetti2018learning} demonstrate that even a uniform mixture of two multinomial logit models, a special case of the mixed logit that violates IIA, requires comparisons over two different choice set sizes.

As a result of this theorem, for choice data collected from sets of a fixed size, the parameters of a CDM model that has been fit to data can not be interpreted without some amount of explicit or implicit regularization. This non-identifiability also applies to all blade-chest models \cite{chen2016modeling}, which (as alluded to in Section~\ref{sec:unify}) are CDM models restricted to pairwise choices. We further explore regularization and identifiability in Appendix \ref{section:regl}.

\subsection{Uniform Performance Guarantees}

The likelihood function is log-concave and can thus be solved to arbitrary error through standard convex optimization procedures (avoiding shift invariance with the constraint $\sum_{xz} u_{xz} = 0$). We now show that maximum likelihood estimation efficiently recovers the true CDM parameters under mild regularity conditions.

\begin{thm}\label{thm:CDM_bound}
Let $u^\star$ denote the true CDM model from which data is drawn. Let $\hat{u}_{\text{MLE}}$ denote the maximum likelihood solution. Assume $\uniqueD$ identifies the CDM. For any $u^\star \in \mathcal{U_B} = \lbrace u \in \mathbb{R}^d : \left\lVert u\right\rVert_\infty \leq B, \textbf{1}^Tu = 0 \rbrace$, and expectation taken over the dataset $\mathcal{D}$ generated by the CDM model,
$$
\mathbb{E}\big[\left\lVert 
\hat{u}_{\text{MLE}}(\mathcal D) - u^\star \right\rVert_2^2\big] 
\leq 
c_{B,k_{\text{max}}}\frac{d-1}{m},
$$
where $k_{\text{max}}$ refers to the maximum choice set size in the dataset, and $c_{B,k_{\text{max}}}$ is a constant that depends on $B$, $k_{\text{max}}$ and the spectrum of the design matrix $G(\mathcal{D})$.
\end{thm}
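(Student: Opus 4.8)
The plan starts from the observation that the CDM is a conditional multinomial logit whose utilities are \emph{linear} in the parameter: for a set $A$ and $x \in A$, $u_\theta(x \mid A) = \sum_{z \in A \setminus x} u_{xz} = \langle u, g_{x,A}\rangle$, where $g_{x,A} \in \{0,1\}^d$ is the indicator of the coordinates $\{(x,z) : z \in A \setminus x\}$, so that $\|g_{x,A}\|_2^2 = |A|-1$. Writing $p_{u,A}(y) := P_u(y \mid A)$, the per-observation log-likelihood $\ell(u; x, A) = \langle u, g_{x,A}\rangle - \log \sum_{y \in A} \exp\langle u, g_{y,A}\rangle$ has gradient $\nabla \ell = g_{x,A} - \sum_{y \in A} p_{u,A}(y)\,g_{y,A}$ and Hessian $\nabla^2 \ell = -\Sigma_{u,A}$ with $\Sigma_{u,A} := \Cov_{y \sim p_{u,A}}(g_{y,A}) \succeq 0$; this re-derives log-concavity of $\mathcal{L}_m(u) := \tfrac1m \sum_{(x,A) \in \mathcal{D}} \ell(u; x, A)$ and reduces the theorem to two ingredients: a uniform lower bound on the curvature of $\mathcal{L}_m$ over $\mathcal{U}_B$, and a bound on the score $\nabla \mathcal{L}_m(u^\star)$.

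\textbf{Step 1: restricted strong concavity.} Two facts combine. First, on $\mathcal{U}_B$ with choice sets of size at most $k_{\text{max}}$, every utility obeys $|u_\theta(y \mid A)| \le (k_{\text{max}}-1)B$, so every choice probability exceeds $p_{\min} := k_{\text{max}}^{-1} e^{-2(k_{\text{max}}-1)B} > 0$. Second, the identity $\Cov_{y \sim p}(g_y) = \tfrac12 \sum_{y,z} p(y) p(z) (g_y - g_z)(g_y - g_z)^{\!\top}$ gives $\Sigma_{u,A} \succeq \tfrac{p_{\min}^2}{2} H_A$, where $H_A := \sum_{y,z \in A} (g_{y,A} - g_{z,A})(g_{y,A} - g_{z,A})^{\!\top}$, each $H_A$ having column space inside $\mathbf 1^\perp$ and being exactly the per-set building block of the design matrix $G(\mathcal{D})$ from the rank characterization of identifiability (Theorem~\ref{thm:rank_test}). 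Averaging over the dataset, $-\nabla^2 \mathcal{L}_m(u) \succeq \tfrac{p_{\min}^2}{2}\,\tfrac1m\sum_{(x,A)\in\mathcal D} H_A \succeq \mu\,\Pi$ for all $u \in \mathcal{U}_B$, where $\Pi$ is the projection onto $\mathbf 1^\perp$ and $\mu := \tfrac{p_{\min}^2}{2}\,\lambda_{\min}^{\perp}(G(\mathcal{D})) > 0$ --- positivity being precisely the hypothesis that $\uniqueD$ identifies the CDM, with $G(\mathcal D)$ normalized so its smallest nonzero eigenvalue does not vanish as $m$ grows. Since $\mathcal{U}_B$ is convex and both $u^\star$ and $\hat{u}_{\text{MLE}}$ (taken as the maximizer of $\mathcal{L}_m$ over $\mathcal{U}_B$) lie in $\mathcal{U}_B \cap \mathbf 1^\perp$, $\mathcal{L}_m$ is $\mu$-strongly concave along the segment joining them.

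\textbf{Step 2: score bound and conclusion.} The score $\nabla \mathcal{L}_m(u^\star) = \tfrac1m \sum_{(x,A) \in \mathcal{D}} \big(g_{x,A} - \sum_y p_{u^\star,A}(y) g_{y,A}\big)$ averages independent terms that are mean-zero given the choice set (since $\E_{x \sim p_{u^\star,A}} g_{x,A} = \sum_y p_{u^\star,A}(y) g_{y,A}$) and satisfy $\E[\|\nabla \ell(u^\star; x, A)\|_2^2 \mid A] = \mathrm{tr}(\Sigma_{u^\star,A}) = (|A|-1) - \|\E_x g_{x,A}\|_2^2 \le k_{\text{max}} - 1$, using $\|g_{x,A}\|_2^2 = |A|-1$; hence $\E\|\nabla \mathcal{L}_m(u^\star)\|_2^2 \le (k_{\text{max}}-1)/m$. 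Optimality of $\hat{u}_{\text{MLE}}$ and $\mu$-strong concavity along $[u^\star, \hat{u}_{\text{MLE}}]$ give
\[
0 \;\le\; \mathcal{L}_m(\hat{u}_{\text{MLE}}) - \mathcal{L}_m(u^\star) \;\le\; \langle \nabla \mathcal{L}_m(u^\star),\, \hat{u}_{\text{MLE}} - u^\star\rangle - \tfrac{\mu}{2}\,\|\hat{u}_{\text{MLE}} - u^\star\|_2^2,
\]
and since $\hat{u}_{\text{MLE}} - u^\star \in \mathbf 1^\perp$, Cauchy--Schwarz yields $\|\hat{u}_{\text{MLE}} - u^\star\|_2 \le \tfrac2\mu \|\nabla \mathcal{L}_m(u^\star)\|_2$. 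Squaring and taking expectations,
\[
\E\big[\|\hat{u}_{\text{MLE}}(\mathcal{D}) - u^\star\|_2^2\big] \;\le\; \tfrac{4}{\mu^2}\,\E\|\nabla \mathcal{L}_m(u^\star)\|_2^2 \;\le\; \tfrac{4(k_{\text{max}}-1)}{\mu^2 m} \;\le\; \tfrac{4(d-1)}{\mu^2 m},
\]
using $k_{\text{max}} - 1 \le n-1 \le d-1$, so one takes $c_{B,k_{\text{max}}} = 4/\mu^2$. (Should one want the factor $d-1$ tight rather than as a bound on $k_{\text{max}}-1$, a sharper second-order expansion using the self-concordance of $\log$-sum-exp replaces the leading term by $\tfrac1m\,\mathrm{tr}(I(u^\star)^{-1}) \le (d-1)/(\mu m)$, trading a power of $\mu$ for the dimension.)

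\textbf{Main obstacle.} The crux is Step 1: $\log$-sum-exp is not uniformly strongly concave on $\mathbb{R}^d$ --- its curvature vanishes as parameters or choice sets grow --- so the analysis must be confined to a bounded region, which is exactly why $u^\star \in \mathcal{U}_B$ is assumed and why one must verify that $\hat{u}_{\text{MLE}}$, and the whole segment to $u^\star$, stays in $\mathcal{U}_B$. Making the curvature quantitative forces one to couple a uniform lower bound on choice probabilities (the $p_{\min}^2$ factor, from $B$ and $k_{\text{max}}$) with the combinatorial rank condition from identifiability (the $\lambda_{\min}^\perp(G(\mathcal{D}))$ factor, via Theorem~\ref{thm:rank_test}); this is the technical heart. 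A secondary subtlety is that $G(\mathcal{D})$ is itself random: the bound is cleanest read as a fixed-design statement (condition on the multiset of presented sets, randomize only the choices), or, equivalently, under the stated convention that $c_{B,k_{\text{max}}}$ may depend on the realized spectrum of $G(\mathcal{D})$.
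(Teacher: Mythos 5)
Your proof is correct and reaches the stated bound, and it shares the paper's overall skeleton (strong convexity of the negative log-likelihood on a bounded parameter set, first-order optimality plus Cauchy--Schwarz, and a variance bound on the score), but the technical route differs in a way worth noting. The paper's central device is a seminorm: it lower-bounds the Hessian by $\beta_{k_{\text{max}}} L$ with $L = \tfrac{1}{m}X(\mathcal{D})^T X(\mathcal{D})$, measures the estimation error in $\lVert\cdot\rVert_L$ and the score in the dual norm $\lVert\cdot\rVert_{L^\dagger}$, obtains $\mathbb{E}[\nabla\ell(u^\star)^T L^\dagger \nabla\ell(u^\star)] \le 2(d-1)/m$ via $\mathrm{tr}(L^\dagger L) = d-1$, and only at the very end converts to $\ell_2$ by paying a single factor of $1/\lambda_2(L)$; its curvature constant $\beta_k$ is defined implicitly as a minimum of the second eigenvalue of the log-sum-exp Hessian over a box. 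You instead fold the design spectrum directly into an $\ell_2$-restricted strong concavity constant $\mu$ (with an explicit $p_{\min}^2/2$ in place of $\beta_k$, via the pairwise-difference representation of the multinomial covariance) and bound the score by $\mathrm{tr}(\Sigma_{u^\star,A}) \le k_{\text{max}}-1$ rather than through the $L^\dagger$ pairing. The trade-off: your bound carries $(\lambda_{\min}^\perp)^{-2}$ where the paper's carries $\lambda_2(L)^{-1}$, but your numerator is $k_{\text{max}}-1$ rather than $d-1$, which can be much smaller, and your constants are fully explicit; neither bound dominates the other, and both are of the form the theorem asserts since $c_{B,k_{\text{max}}}$ is permitted to depend on the spectrum of $G(\mathcal{D})$. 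Your closing caveats --- that the segment argument needs convexity of $\mathcal{U}_B$, and that the statement is cleanest read conditionally on the realized choice sets --- are both apt; the paper's own proof treats $\lambda_2(L)$ and $\mathrm{tr}(L^\dagger L)$ as deterministic in exactly the same way.
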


The proof is given in Appendix~\ref{app:convergence}, where we also 
state the exact relationship of $c_{B,k_{\text{max}}}$
to the max norm radius $B$, the maximum choice set size $k_{\text{max}}$ and design matrix $G(\mathcal{D})$. 
Both the identifiability condition and maximum norm bound are essential to the statement, as the right hand side diverges when the former is violated, and diverges as $B \rightarrow \infty$.

Theorem~\ref{thm:CDM_bound} is a generalization of a similar convergence result previously shown for the multinomial logit case \cite{shah2016estimation} (the multinomial logit model class is a subset of the CDM model class). The proof follows the same steps, showing first that the objective satisfies a notion of strong convexity, and using that fact to bound the distance between the estimate and the true value. Our contributions augment the notation of \citet{shah2016estimation} to support multiple set sizes and the more complex structure of the CDM, and carefully bound the role of these deviations in the steps leading to the result. 



To our knowledge, this convergence bound furnishes the first tractable sample complexity result for a model that can accommodate deviations from a random utility model (RUM). A comparable lower bound, which we do not furnish in this work, would make clear whether the maximum likelihood procedure is inferentially optimal or not. 
And while stated for the full-rank model, our convergence bound holds for CDMs of any rank. It is possible that low-rank CDMs admit an improved rank-dependent convergence rate.

\subsection{Testing}
We can use the CDM to construct a statistical test of whether our data is indeed consistent with the MNL/Luce model, and thus IIA, across the choice sets we observe. Recall that the class of Luce models is nested within the CDM, which is in turn nested within the universal logit, as discussed in Section~\ref{sec:choicesystem}. We can consider the following likelihood ratio statistic,
\[\Lambda(\mathcal{D}) = \frac{\sup_{\theta \in \Theta_{\text{Luce}} \subset \Theta_{\text{CDM}}} \mathcal{L}(\mathcal{D} \mid \theta)}{\sup_{\theta \in \Theta_{\text{CDM}}} \mathcal{L}(\mathcal{D} \mid \theta)},\]
where $\Theta_{\text{Luce}}$ and $\Theta_{\text{CDM}}$ respectively refer to the parameter classes of Luce and CDM Models. We then appeal to a classical result from asymptotic statistics \cite{wilks1938large} that as the sample size $m \rightarrow \infty$, $D = -2 \log(\Lambda(\mathcal{D}))$ converges to the $\chi^2$ distribution with degrees of freedom $\Delta$ equal to the difference between the number of parameters between the two model classes. For CDM and Luce, $\Delta = n(n-2)$. For a universal logit and Luce, $\Delta = (\sum_{C \in \uniqueD} (|C| - 1))  - (n - 1)$, where $\uniqueD$ are the unique subsets in the dataset that the test can reasonably evaluate. Our test then compares the statistic to the value of the $\chi^2_{\Delta}$ distribution corresponding to a desired level of statistical significance.

We are keen to note that the CDM test likely enjoys finite sample guarantees when the true distribution is sampled from a CDM, 
owing to the vanishing risk of the MLE shown in Theorem~\ref{thm:CDM_bound}. In experiments that follow, we look at the finite sample performance of this likelihood ratio test, evaluating this claim empirically and comparing the performance of our test to the universal logit test.

\subsection{Unifying Existing Choice Models}
\label{sec:unify}

The CDM and low-rank CDM generalize a number of prominent choice models in a unified framework. In this section we present connections to the work of \citet{tversky1993context}, \citet{batsell1985new}, and \citet{chen2016modeling,chen2016predicting}. This means that our convergence and identifiability results carry over to these other models, which all previously lacked such results.
 
{\bf The Tversky-Simonson model.}
The {\it additive separable utility model (ASM)} is the cornerstone of random utility modeling in many applications. In the ASM the utility of item $x$ can be written as an inner product $ u(x) = w^T t_x$, where $t_x$ is a feature vector of item $x$ (typically known to the analyst, but sometimes latent) and the vector $w$ contains the parameters of the linear model (estimated from data). The parameters $w$ have a real world interpretation: they are the weights that an individual places on each attribute. These can be used to estimate counterfactuals: for example, how much would an individual rank a new item $y$ that we have not seen before?

A seminal experiment by Tversky and Kahneman asks individuals to consider a situation where they are purchasing an object and they learn that the same object is available across town (a 20 minute drive away) for $\$ 5$ cheaper \cite{tversky1985framing}. They then ask whether the individuals would drive across town to take advantage of this lower price, essentially a question about their value of time. Individuals are more likely to drive across town when they are considering purchasing a $\$ 10$ object compared to when they are purchasing a $\$ 120$ object, even though the time/money tradeoff is identical.

The ASM assumes that the weights are constant across contexts, making the choices in the story above impossible if the ASM is indeed the true model. \citet{tversky1993context} expand the ASM to allow context to adjust the weights that individuals place on attributes while keeping the attributes fixed. This approach has a particular psychological interpretation: the presence of certain items makes some dimensions of a choice more salient than others, an effect that appears across a variety of decision situations.

This is formalized by setting utility of $x$ in context $C$ to be $u^{TS} (x \mid C) = w(C)^T t_x.$ Tversky and Simonson discuss several ways in which some experimental results can be modeled using various forms of weights $w(C)$, though their approach requires both features and context-dependent weight functions to be hand-engineered. They do not formalize any procedure for how one can learn such a model from choice data directly. Thus our CDM can be seen as a method for learning the parameters of a Tversky-Simonson model directly from data in an efficient manner. 

{\bf The Batsell-Polking model.}
\citet{batsell1985new} introduces a model of competing product market shares that can also be written as a truncated expansion of the log ratio of choice probabilities. The CDM can be viewed as an alternative parameterization of a {\it third-order Batsell-Polking model}.
There are several significant differences between the way Batsell and Polking viewed their third-order model and how we view the CDM. First, Batsell and Polking advocated for fitting their models to data using a hand-tuned least squares procedure whereas we use more general maximum likelihood techniques. Second, our identifiability and convergence results are entirely new. Their least-squares procedure understandably has no analogous guarantees. Lastly, our restriction to low-rank parameterizations is squarely new and can greatly reduce the model complexity. 

{\bf The Blade-Chest model.} 
Standard models for competition build on the Elo rating system for chess \cite{elo1978rating} and the TrueSkill rating system for online gaming \cite{herbrich2006trueskill}. Both of these models assume that individuals have a one-dimensional latent ``skill'' parameter that can be discovered from matchup data between competitors.

The Blade-Chest model \cite{chen2016modeling,chen2016predicting} tries to model rock-paper-scissors-type intransitivies in pairwise matchups through a multidimensional latent embedding of skill. 
In the language of our CDM, the blade-chest ``inner product'' model (the authors also consider a ``distance'' model) defines the probability that $x$ beats $y$ as: 
\begin{equation*}
\Pr(x \mid \{x,y\}) = \dfrac{\exp( t_x^T c_y )
}{
\exp( t_x^T c_y ) +
\exp( t_y^T c_x )
},
\end{equation*}
which is precisely a CDM restricted to pairs.
We can view the CDM as a natural extension of the Blade-Chest model from pairs to larger sets. Considering our negative identifiability result for choice data consisting of only a single set size (Theorem~\ref{thm:single_set_bad}), we conclude that the Blade-Chest model is not identifiable and requires either explicit or implicit regularization in order to make the parameters interpretable. 

\section{Experiments}

\begin{figure*}[!ht]
	\centering
	\includegraphics[width=.24\textwidth]{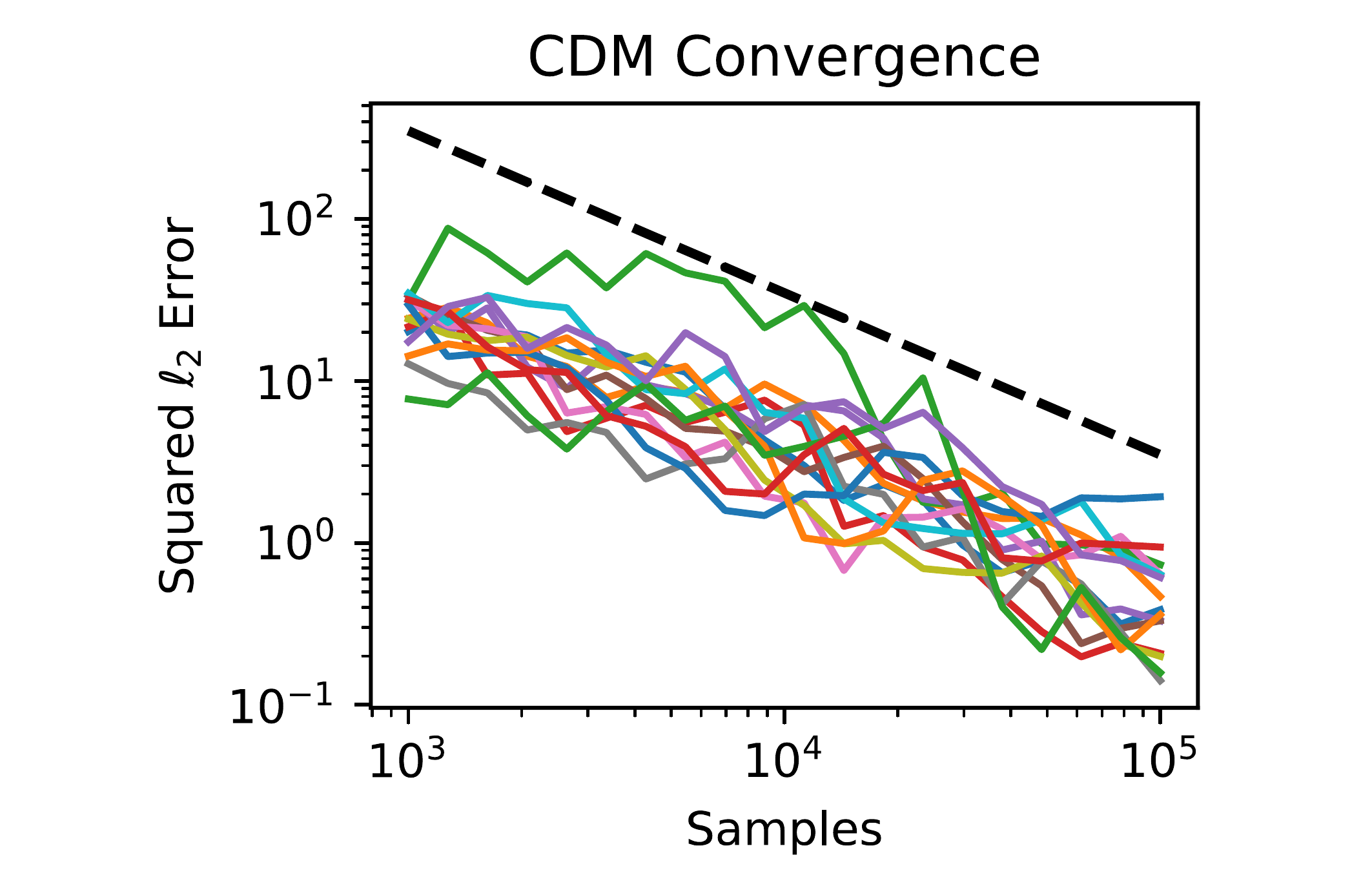}
	\includegraphics[width=.24\textwidth]{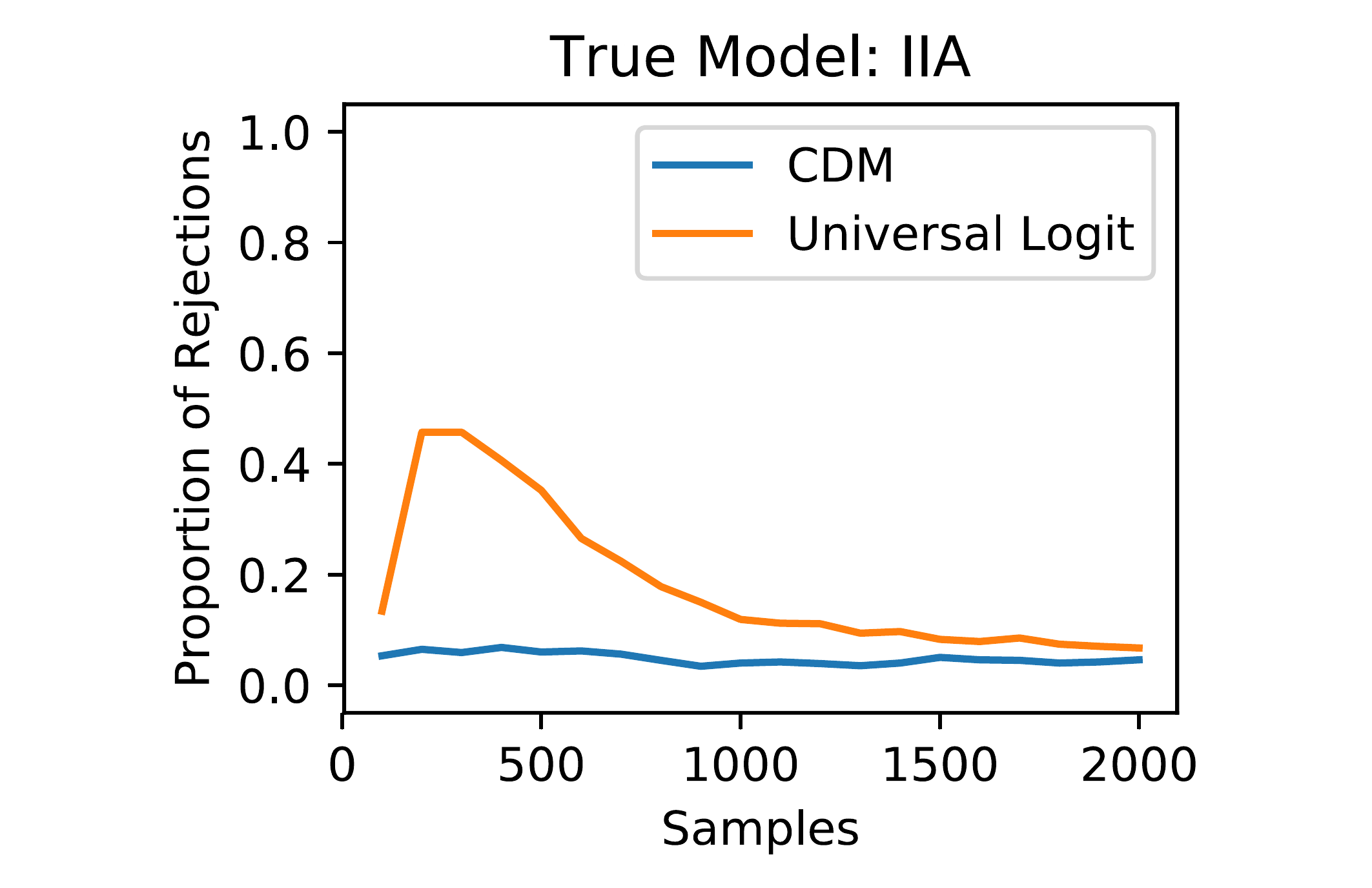}
	\includegraphics[width=.24\textwidth]{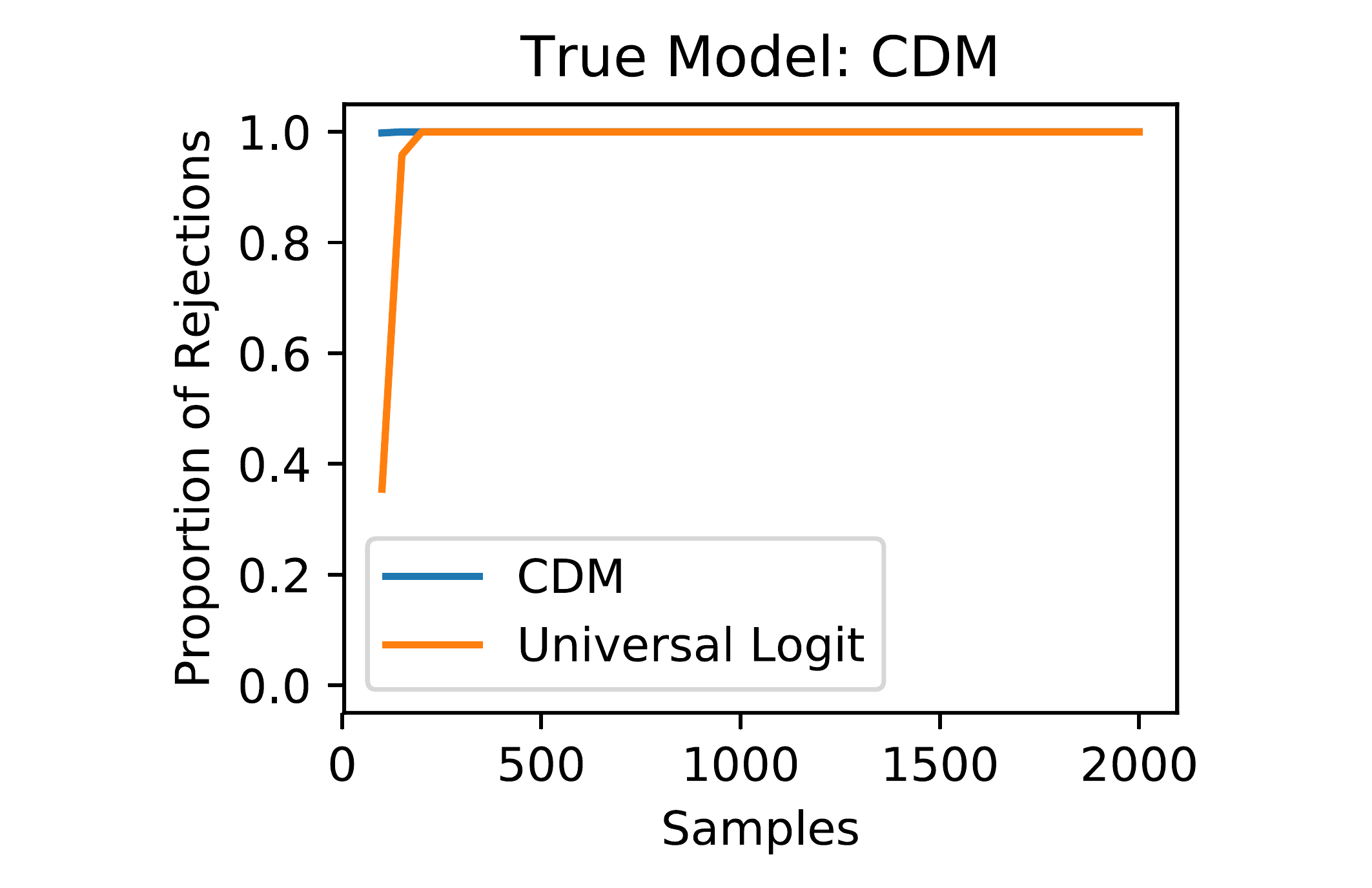}
	\includegraphics[width=.24\textwidth]{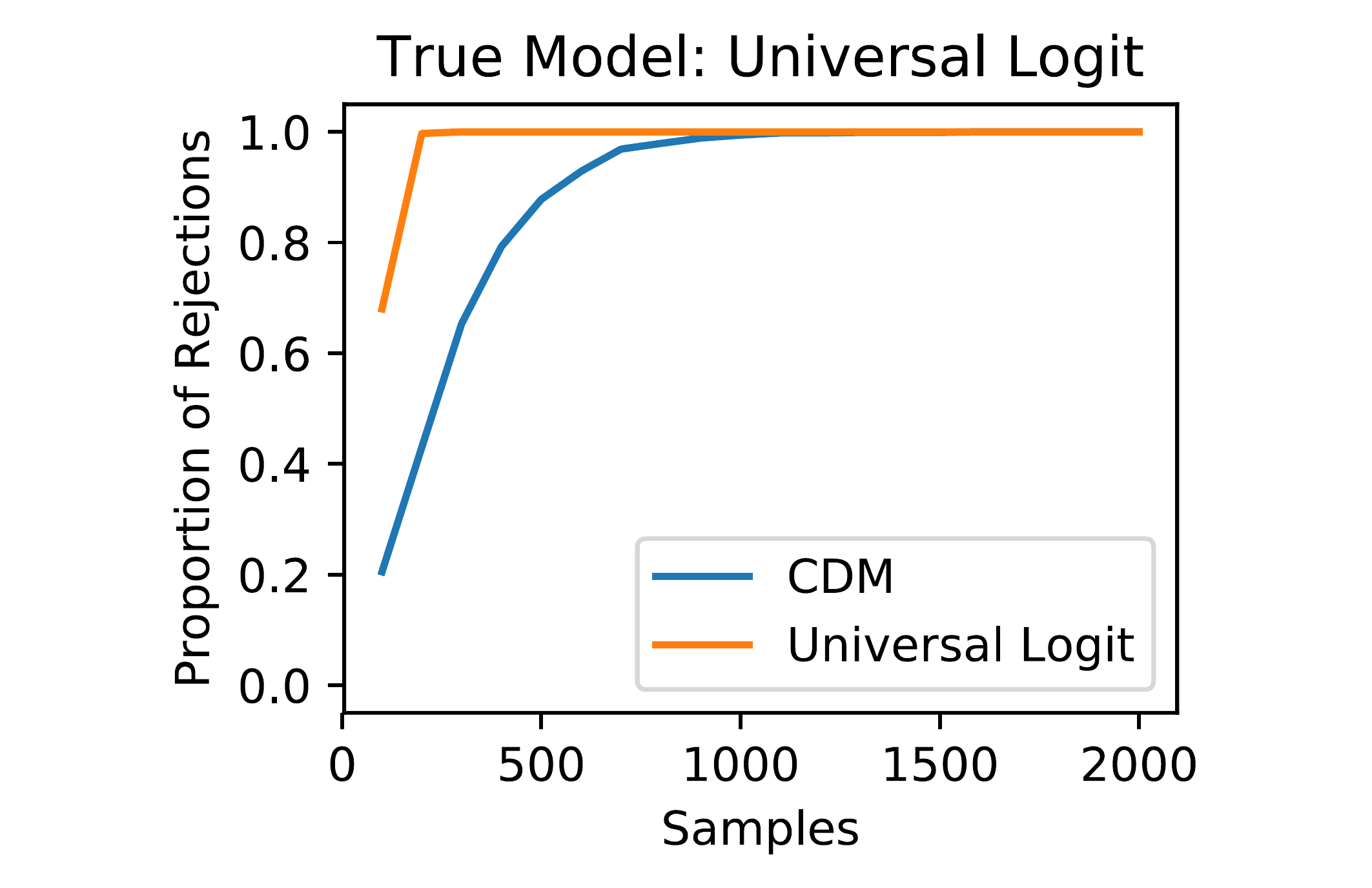}
	\vspace{-4mm}
	\caption{{\it (a)} Approximation error of an estimated CDM in $10$ growing datasets validates our convergence theorem. The dashed black line is a visual guide of the slope $1/m$. {\it (b,c,d)} The proportion of rejections for a CDM-based hypothesis test  of IIA (at a threshold of $p < .05$) when the data is generated by a MNL, CDM, and general choice system model as a function of the number of samples. When IIA is true, the CDM-based test has a 5\% of false rejection rate while the test based on the general choice system is highly anti-conservative. When IIA is false, both tests quickly and correctly reject. All model parameters are described in the main text.} 
	\label{fig:cdmerror}
\end{figure*}

We now evaluate the CDM and low-rank CDM on data. Our evaluation includes comparisons with MNL/Luce models and mixed MNL models \cite{mcfadden2000mixed}. MNL and CDM model likelihoods are optimized using Adam~\cite{kingma2014adam}, a stochastic gradient descent algorithm with adaptive moment estimation. Mixed MNL likelihoods are optimized using open source code from \cite{ragain2016pairwise}. The CDM parameter optimization is initialized with values corresponding to a Luce MLE for that dataset. All datasets are pre-existing and public; replication code for all figures will be released at publication time.

{\bf Simulated Data.} 
We begin with simulated data, which allows us to validate our theoretical results regarding the convergence of the MLE in a setting where the underlying data-generating process is known. Since we know whether IIA holds in the simulated data, simulated data is also useful for examining two aspects of the CDM-based hypothesis test. First, we ask about the power of the test, in other words, does the test reject IIA when it is not true? Second, we ask about the conservatism of the test. The nested model likelihood ratio tests are only valid asymptotically; in our simulated data we can check whether the CDM over or under-rejects the null hypothesis of IIA in finite samples.

We consider three data-generating processes: one where the data is generated from a MNL model (where IIA holds), one where the data is generated from a CDM, and one where the data is generated from a general choice system. The universe has $n=6$ items. In the IIA dataset the underlying MNL model has parameters $[1,2,3,4,5,6]/21$. In the CDM dataset the parameters $U = T^T C$ are generated by sampling elements of both 6x6 matrices $T$ and $C$ i.i.d.\ from $N(0,1)$. The probabilities of the general choice system are sampled $U[0,1]$ and renormalized.
We sample choice sets uniformly at random (thus our identification conditions are quickly met) and then a choice according to the underlying model. We fit a Luce, CDM, and universal logit model to the data and look at both the error of the CDM MLE (to evaluate convergence) and the $p$-value from the nested model likelihood ratio tests. When the $p$-value falls below $.05$ we say that the hypothesis of IIA is rejected. 

In addition, we compare the CDM-based nested test to another nested model test where we use a general choice system as the alternative model. Recall that the general choice system also nests MNL. However, the general choice system has combinatorially more parameters.

Figure~\ref{fig:cdmerror} shows our results. The left panel validates the $O(\frac{1}{m})$ convergence result in Theorem~\ref{thm:CDM_bound}. The right three panels look at how often the hypothesis of IIA is rejected, out of $1000$ independent growing datasets, when the underlying data comes from the three different data generating processes. We see that the CDM rejects the null less than $5 \%$ of the time when the data generating process indeed satisfies IIA and rejects IIA when it is not true almost all the time, even with relatively small amounts of data. 

By contrast we see that the universal logit requires quite a lot of data to reach the asymptotically valid coverage, even with a universe of only $6$ items. For finite samples it is highly anti-conservative, over-rejecting when IIA is true for small and medium amounts of data.

{\bf SFwork/SFshop.}
We now turn to two real-world datasets: SFwork and SFshop. These data are collected from a survey of transportation preferences around the San Francisco Bay Area \cite{koppelman2006self}. SFshop consists of 3,157 observations of a choice set of transportation alternatives available to individuals traveling to and from a shopping center, as well as what transportation that individual actually chose. SFwork is similar, containing 5,029 observations consisting of commuting options and the choice made.

\begin{figure}
\centering
\includegraphics[height=26mm]{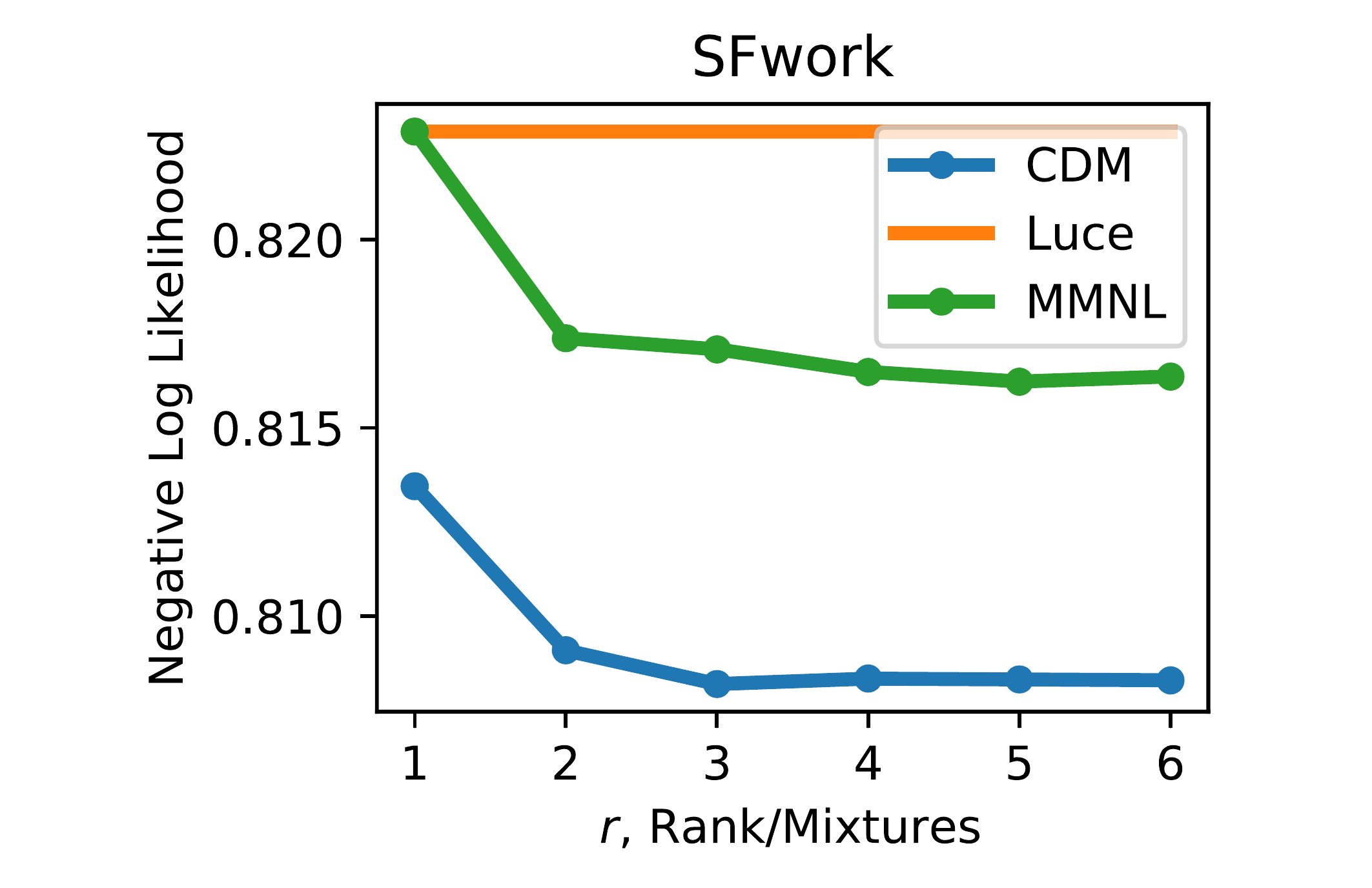}
\includegraphics[height=26mm]{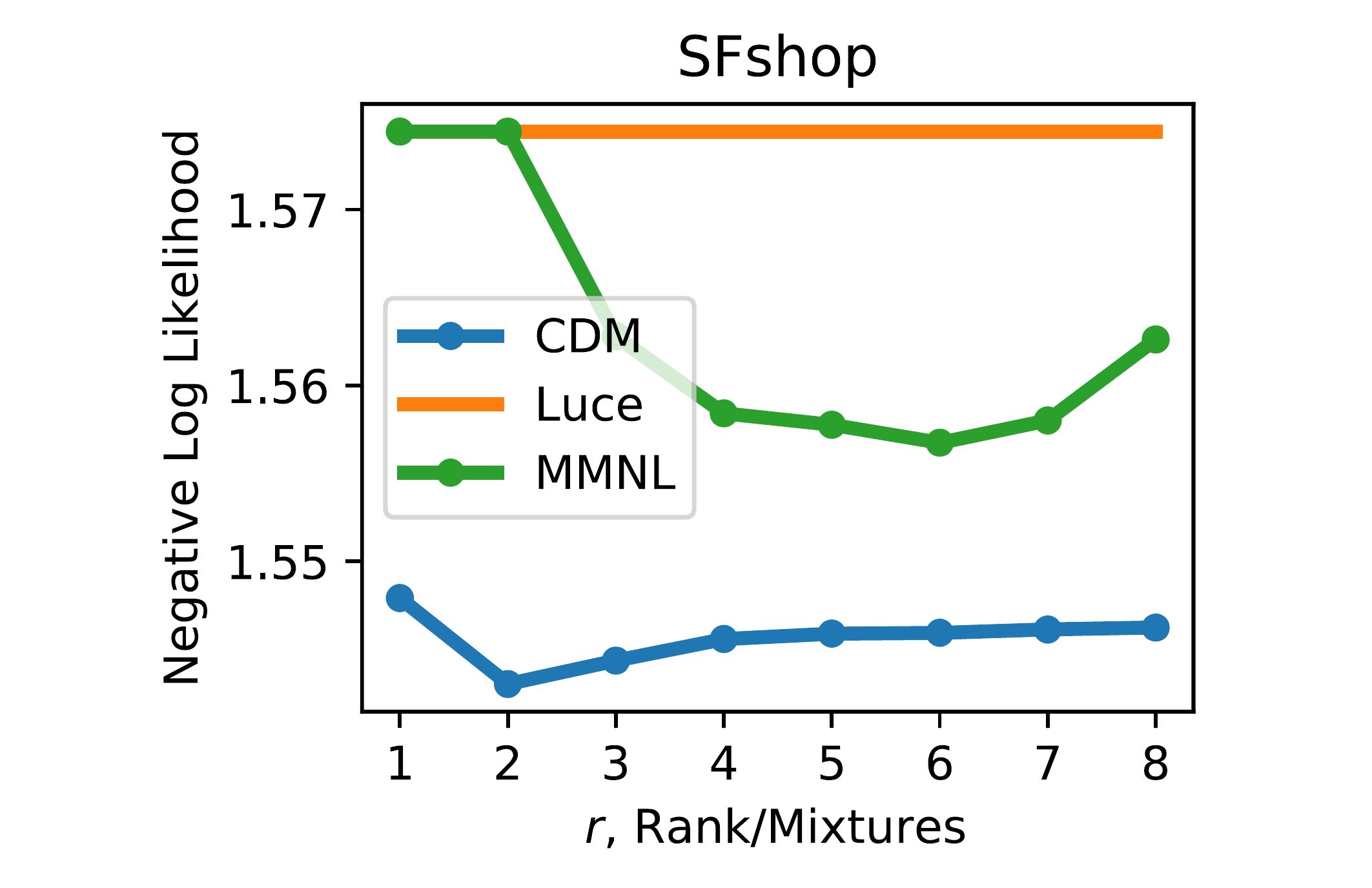}
\caption{
The out of sample negative log-likelihood of the MLE for the SFWork and SFshop datasets under an MNL/Luce model, mixed MNL model with varying number of mixture components, and CDMs of varying rank. The CDM outperforms the other models at all ranks.}
\label{fig:sfworkshop}
\vspace{-4mm}
\end{figure}

\begin{figure*}[!t]
\centering
\includegraphics[width=0.27\textwidth]{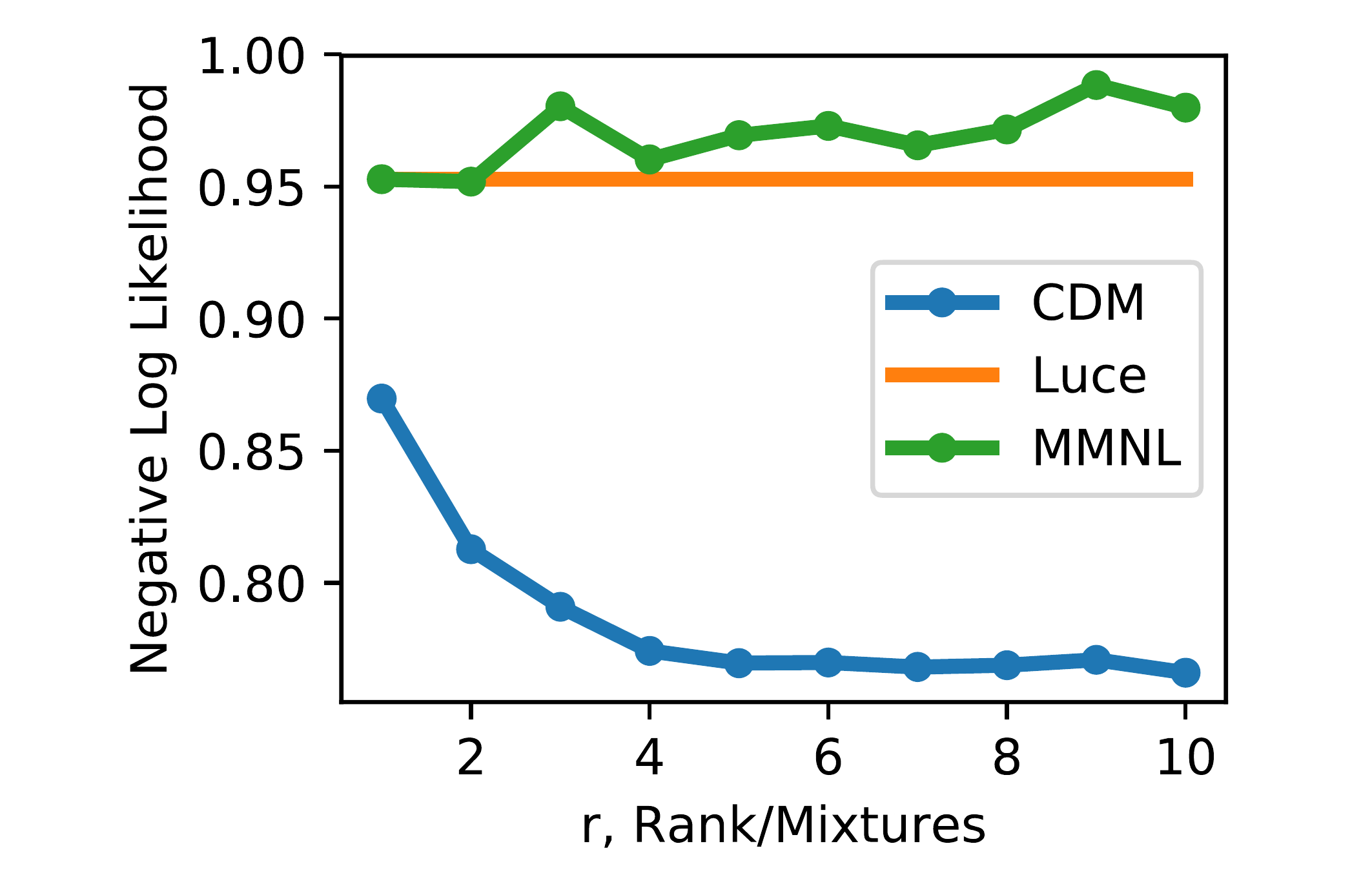}
\includegraphics[width=0.27\textwidth]{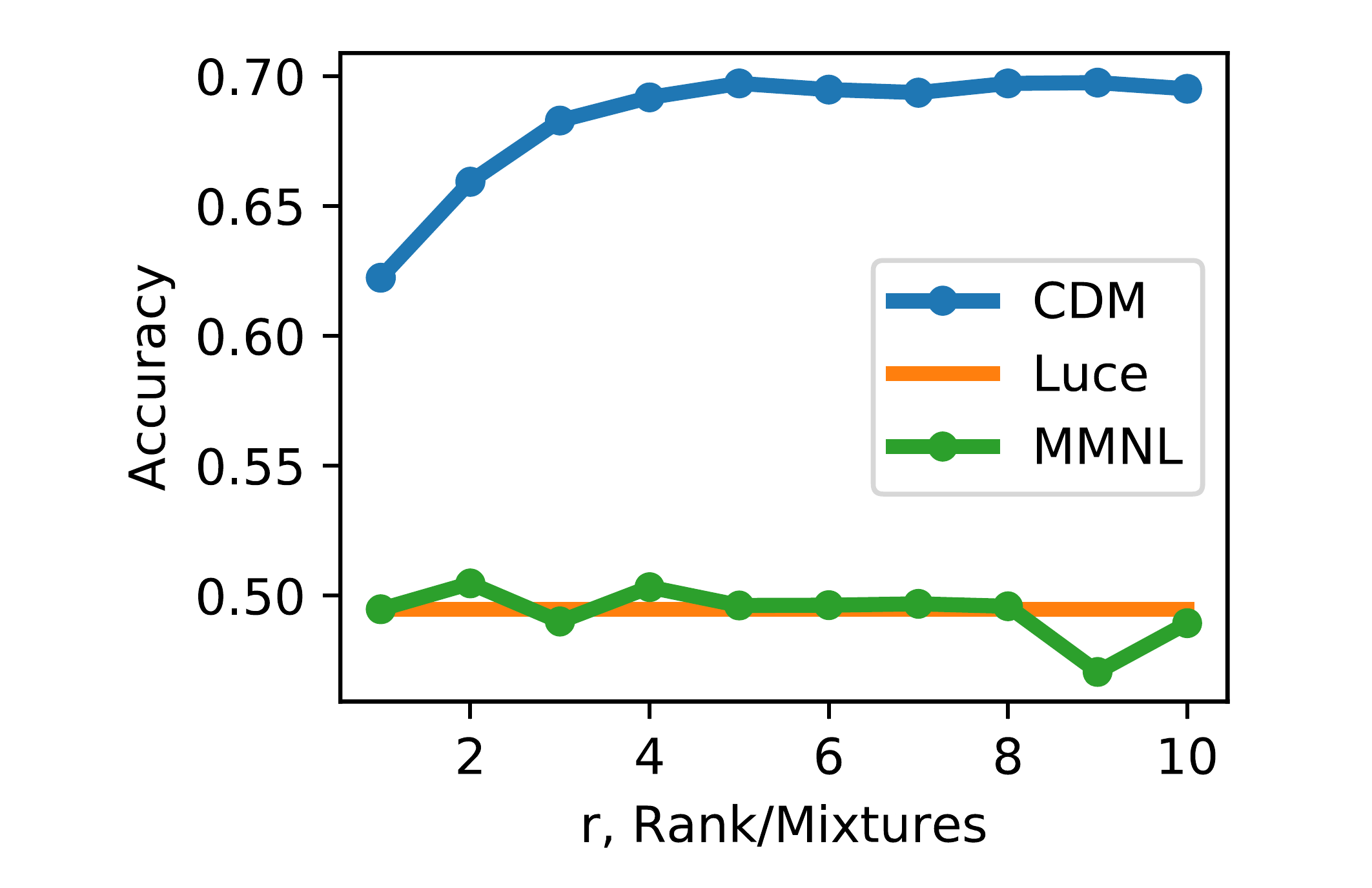}
\includegraphics[width=0.45\textwidth]{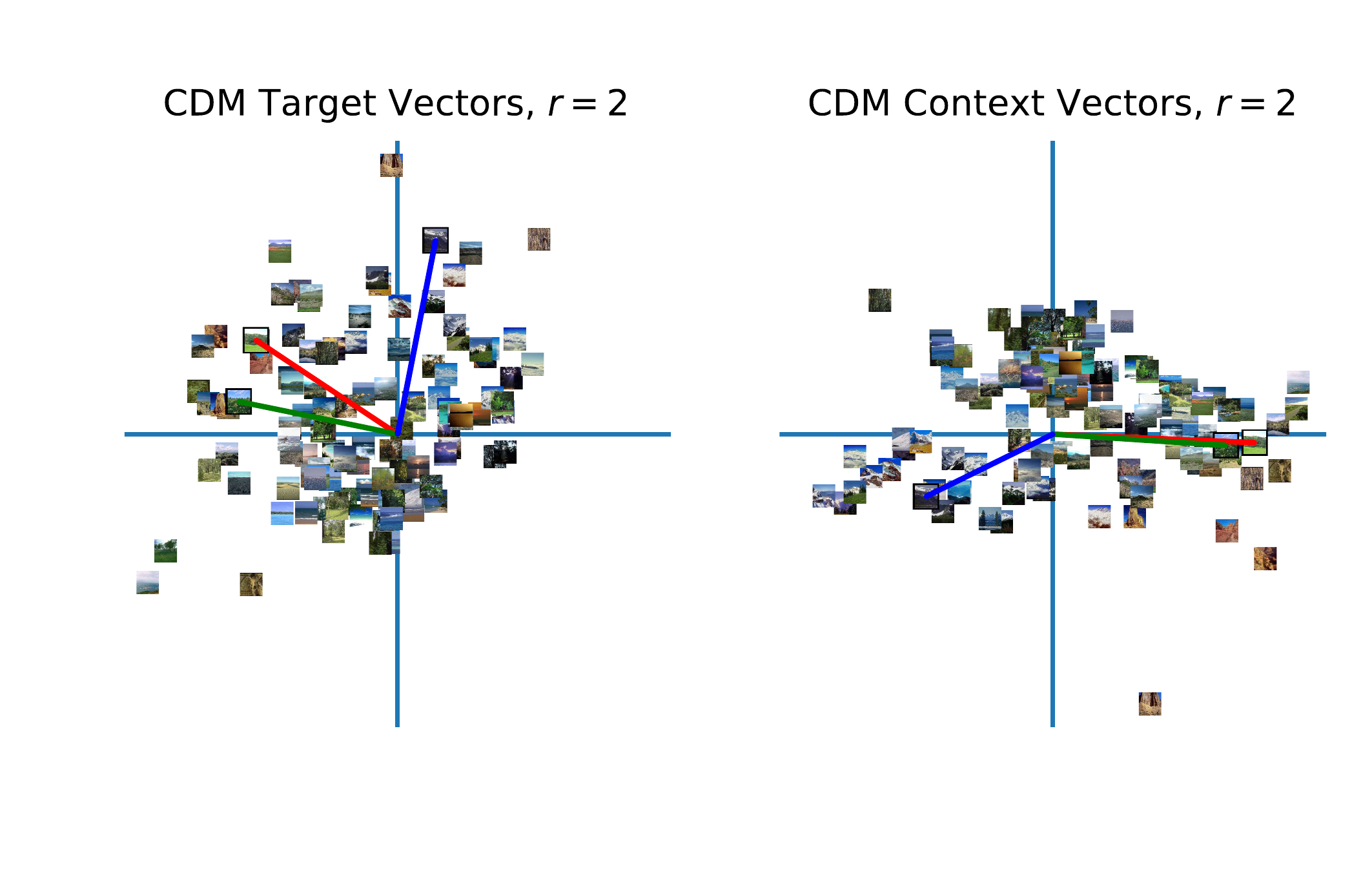}
\vspace{-7mm}
\caption{
(Left) The out of sample negative log-likelihood and accuracy of the MLE for the nature photo dataset under an MNL/Luce model, mixed MNL model with varying number of mixture components, and CDMs of varying rank. The CDM outperforms the other models at all ranks in terms of both likelihood and accuracy.
The target (center) and context (right) vector embeddings for the nature photo dataset with a rank-$2$ CDM, with three sample vectors highlighted. An item's context and target vectors have, on average, a negative dot product, showing that the addition of an item makes similar items already in the choice set less likely to be chosen.
}
\label{fig:sesame}
\end{figure*}

These datasets are similar to those employed in many demand estimation applications. For example, \citet{berry1995automobile} fit a MNL model to aggregate data in order to estimate the utility function of an average consumer for automobiles as well as how individuals (on average) trade off various qualities of a car (e.g., gas mileage vs.\ price). With access to underlying parameters, the analyst can then make counterfactual estimates such as, for example, what would be the sales of a hypothetical cheaper and higher gas mileage car? With the SFWork/SFShop data, we can ask questions like: what would happen if we made certain types of transit more or less available? Of course, if the underlying assumption (IIA) of the MNL model is wrong, then we may expect our counterfactual answers to also be wrong.

We run both hypothesis test for IIA (asking: ``is there IIA in the data?'') and examine the out-of-sample performance of the low-rank CDM (asking: ``does the violation of IIA have meaningful consequence for prediction?''). From the hypothesis test we obtain a $p$-value of $10^{-7}$ and can strongly reject IIA. Figure~\ref{fig:sfworkshop} shows the out of sample fit on a held out $20 \%$ of the data for low-rank CDMs, mixed MNLs, and an MNL model, again showing that IIA is not satisfied in this data. The full-rank unfactorized CDM is omitted from the figure for improved visibility, but attains an out of sample log-likelihood of $0.808$ and $1.540$, respectively for SFwork and SFshop. Though the unfactorized CDM outperforms MNL and mixed MNLs, it fails to outperform low-rank CDMs. 

{\bf Not Like The Other.}
We turn to a slightly different dataset to demonstrate another way the CDM can be used. We consider the task introduced by \citet{heikinheimo2013crowd} where individuals are shown triplets of nature photographs and asked to choose the one photo that is most unlike the other two. This task involves comparison-based choices \cite{kleinberg2017comparison} where there are no ``irrelevant alternatives'' and IIA is clearly violated: consider two example task where the choice set is two mountains and a beach vs.~two beaches and a mountain.

The dataset is comprised of $m=3355$ triplets spanning $n=120$ photos. 
Because the dataset only has choice sets of a fixed size, the CDM is not directly identifiable (Theorem~\ref{thm:single_set_bad}). We resolve this issue by adding an $\ell_2$ regularization term to the log-likelihood. 
For a small positive regularization penalty, the optimizer then selects the least norm solution within the null space. We choose a non-negligible penalty, chosen through cross-validation, to serve the additional purpose of improving model generalization.

We fit low-rank CDMs and see that they handily outperforms a MNL model (i.e.~just item-level utilities) and mixed MNL models on a $20 \%$ held-out test set (Figure~\ref{fig:sesame}, left). Though mixed MNL is often a competitive baseline, it is still a RUM, and cannot model the inherent asymmetric dominance of this task. The full-rank, unfactorized CDM is again omitted, but attains an out of sample log-likelihood of $0.843$, yet again outperforming MNL and mixed MNL but falling significantly short of the low rank models. We plot the vectors learned in the low-rank CDM (Figure~\ref{fig:sesame}, right). We see that similar images are grouped together both as targets and as contexts. We also see an intuitive property of the dataset: for most items $x$, $t_x$ and $c_x$ have a negative inner product. Essentially, having two copies of the same item in a choice set makes each copy less likely to be chosen. 

\section{Conclusion}

Existing work has argued that context dependence, and in particular choice-set dependence, is an important part of human decision-making \cite{ariely2003coherent,slovic1995construction,tversky1993context}. Tractable tools like the CDM are therefore crucial to further understanding decision-making, providing both good empirical performance and optimistic worst case guarantees. It should also be noted that IIA violations are often seen in intertemporal choice, choice under uncertainty, and choices about cooperation. Applying a CDM to these domains is an important area of future work.

There is separate experimental evidence that human choices are intransitive in some settings, where people may prefer $A$ to $B$ and $B$ to $C$ but then $C$ to $A$. This evidence has given rise to a theoretical literature on relaxing the transitivity axiom of rational choice or the regularity axiom of random utility \cite{tversky196919intransitivity, ragain2016pairwise, benson2016relevance}. To that end, an axiomatic characterization of the CDM and the kinds of violations of rational choice that the model can or cannot represent would be worthy of further study.


Understanding human decision-making is an important endeavor for both basic and applied science and is becoming increasingly important in human-centered machine learning and artificial intelligence. 
We view the introduction of techniques from machine learning and AI into behavioral science and the flow of realistic models of human behavior in the other direction as crucial and beneficial for both fields \cite{wager2015estimation,fudenberg2014recency,naecker2015lives,epstein2016good,peysakhovich2017group}. We hope that our work contributes to this important conversation.


\section*{Acknowledgments}

We thank Fred Feinberg, Stephen Ragain, and Steve Yadlowsky for their helpful comments and feedback. AS was supported in part by an NSF Graduate Research Fellowship. JU was supported in part by an ARO Young Investigator Award.

\bibliographystyle{agsm}
\bibliography{cdm}

\onecolumn
\appendix

\section{Proofs of Identifiability}
\label{app:id}

There are three main theorems proven in this section of the appendix. The first two are given in the main text.

{\bf Theorem~\ref{thm:two_sets_good}.} \ \emph{
A CDM is identifiable from a dataset $\mathcal{D}$ if $\uniqueD$ contains comparisons over all choice sets of two sizes $k, k'$, where at least one of $k,k'$ is not $2$ or $n$.
}

{\bf Theorem~\ref{thm:single_set_bad}.} \ \emph{
No rank $r$ CDM, $1 \le r \le n$, is identifiable from a dataset $\mathcal D$ if $\uniqueD$ contains only choices from sets of a single size.
}
\begin{thm}
\label{thm:rank_test}
A full rank CDM is identifiable from a dataset $\mathcal D$ if and only if the rank of an integer design matrix $G(\mathcal{D})$, properly constructed, is $n(n-1)-1$.  
\end{thm}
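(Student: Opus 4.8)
The plan is to recast identifiability as a linear-algebraic condition on the information a dataset reveals about the parameter vector $u \in \mathbb{R}^{n(n-1)}$ (coordinates indexed by ordered pairs $(x,z)$, $x\neq z$). The first step is to pin down exactly what the observed distributions $\{P(\cdot\mid C): C\in\uniqueD\}$ determine. Since the softmax map is invariant only to a common additive shift of its arguments, $P(\cdot\mid C)$ determines the contextual utilities $\bar u(x\mid C)\defeq \sum_{z\in C\setminus x}u_{xz}$ precisely up to a constant that may depend on $C$; equivalently it determines the within-set differences $\bar u(x\mid C)-\bar u(y\mid C)=\log\!\big(P(x\mid C)/P(y\mid C)\big)$ for all $x,y\in C$, and nothing further. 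Each such difference is a linear functional of $u$: expanding, $\bar u(x\mid C)-\bar u(y\mid C)=\sum_{z\in C\setminus x}u_{xz}-\sum_{z\in C\setminus y}u_{yz}$, whose coefficient on coordinate $(a,b)$ equals $+1$ if $a=x,\ b\in C\setminus x$, equals $-1$ if $a=y,\ b\in C\setminus y$, and equals $0$ otherwise — in particular always an integer (indeed $\pm1$ or $0$, and with no cancellation since $x\neq y$).

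Second, I would assemble $G(\mathcal{D})$ as the matrix whose rows are exactly these coefficient vectors, one row per triple $(C,x,y)$ with $C\in\uniqueD$ and $x,y\in C$; it suffices, for each $C$, to keep only differences against one fixed reference element of $C$, as the rest lie in their span, which does not change the row space. Then the full information a dataset reveals about the parameters is precisely the vector $G(\mathcal{D})\,u$, so two CDM parameter vectors $u,u'$ are observationally indistinguishable on $\mathcal{D}$ if and only if $u-u'\in\ker G(\mathcal{D})$.

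Third, I would relate this kernel to the rank. Note $\mathbf{1}\in\ker G(\mathcal{D})$ for every dataset: shifting every $u_{xz}$ by a common constant $c$ changes each $\bar u(x\mid C)$ by $c(|C|-1)$, a shift common to all $x\in C$, hence leaves every within-set difference fixed. This is exactly the scale invariance already quotiented out by the constraint $\mathbf{1}^Tu=0$ in the parameterization. Hence $\operatorname{rank}G(\mathcal{D})\le n(n-1)-1$ always, and a full-rank CDM — whose parameter space is $\{u:\mathbf{1}^Tu=0\}$ — is identifiable from $\mathcal{D}$ if and only if $\ker G(\mathcal{D})$ contains no direction other than $\mathbf{1}$, i.e. $\ker G(\mathcal{D})=\operatorname{span}(\mathbf{1})$, equivalently $\operatorname{rank}G(\mathcal{D})=n(n-1)-1$. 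This yields both directions.

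The part needing the most care is the ``and nothing further'' claim in step one: verifying that distinct tuples of within-set log-odds yield distinct tuples of distributions, and conversely that matching all within-set log-odds forces matching distributions. Both follow from injectivity of softmax up to an additive shift, but one must check that the per-set shift freedom does not couple across different sets in $\uniqueD$ — which it does not, precisely because $G(\mathcal{D})$ records only differences internal to each $C$. Once this is settled the rest is routine, and the same construction, demanding observations over \emph{every} $C\in\allsubsets$ rather than only those in $\uniqueD$, likewise shows that a fully general choice system requires data on all subsets.
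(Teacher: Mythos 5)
Your proposal is correct and follows essentially the same route as the paper: the paper likewise reduces identifiability to the rank of a design matrix whose rows encode the linear functionals of $u$ revealed by within-set log-probability ratios (its Lemma~\ref{lemma:rank_condition} together with Facts~\ref{unique_beta_map_prob} and~\ref{beta_param_diff}), notes that $\mathbf{1}$ always lies in the kernel, and concludes identifiability up to shift iff the rank equals $n(n-1)-1$. The only cosmetic difference is that the paper normalizes each $\log P(x\mid C)$ by the geometric mean over $C$ rather than by a reference element, which yields the same row space.
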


We begin with a few definitions and simple facts, providing proofs for clarity. Given these facts, main workhorse for proving our identifiability theorems is Lemma~\ref{lemma:rank_condition}. 

Since the CDM parameters are invariant to constant offsets, we choose (for the full rank case) an offset such that 
\begin{align}\label{offset_eq}
    \sum_{x \in \exx} \exp\Big(\sum_{z \in \exx \setminus x} u_{xz}\Big) = 1.
\end{align} 
Note that this implies $P_{x,\exx} = \exp(\sum_{z \in \exx \setminus x} u_{xz})$.\\

Because the CDM is a logit-based model, it will be much easier to work with log probability ratios. To that end, we define, for a choice set $C \ni x$, \begin{align} \label{eqn:betaxC}
\beta_{x, C} = \log(P_{x,C} / \bar{P_{C}}),
\end{align}
where $\bar{P_{C}} = (\prod_{y \in C}P_{y,C})^{\frac{1}{|C|}}$, the geometric average of the probabilities.
\begin{fact}\label{unique_beta_map_prob}
	Given a choice set $C$ of size $s$, there is a 1-to-1 mapping between the set of log probability ratios $\{\beta_{x,C}: x \in C \}$ and the set of probabilities $\{ P_{x,C}: x \in C\}$. 
\end{fact}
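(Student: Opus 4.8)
The plan is to exhibit explicit maps in both directions and verify that they are mutual inverses. First I would note that under the CDM (indeed, under the offset convention \eqref{offset_eq}) every choice probability $P_{x,C}$ is a ratio of exponentials and hence strictly positive, so each $\beta_{x,C} = \log(P_{x,C}/\bar{P_C})$ is well defined and the forward map $\{P_{x,C}\}_{x \in C} \mapsto \{\beta_{x,C}\}_{x \in C}$ is a genuine function. A one-line computation pins down its image: since $\bar{P_C}$ is the geometric mean of $\{P_{y,C}\}_{y\in C}$, we have $\sum_{x \in C} \log P_{x,C} = |C| \log \bar{P_C}$, and therefore $\sum_{x \in C} \beta_{x,C} = 0$. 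So the forward map lands in the hyperplane $H = \{(\beta_x)_{x \in C} : \sum_{x \in C} \beta_x = 0\}$.

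Next I would write down the candidate inverse. Given any tuple $(\beta_x)_{x \in C} \in H$, set $\tilde{P}_{x,C} = \exp(\beta_x) / \sum_{y \in C} \exp(\beta_y)$, i.e. the softmax of the $\beta$'s; this is manifestly a strictly positive probability vector supported on $C$. To check that this inverts the forward map, start from a probability vector $\{P_{x,C}\}$ with $\sum_{x\in C} P_{x,C} = 1$: from $P_{x,C} = \bar{P_C}\exp(\beta_{x,C})$ and summing over $x \in C$ we get $1 = \bar{P_C}\sum_{y \in C} \exp(\beta_{y,C})$, so $\bar{P_C} = \big(\sum_{y\in C} \exp \beta_{y,C}\big)^{-1}$ and the softmax of the $\beta$'s returns exactly $P_{x,C}$. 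Conversely, starting from $(\beta_x) \in H$ and forming $\tilde P$, its geometric mean equals $\big(\prod_{y} \exp \beta_y\big)^{1/|C|} / \sum_y \exp\beta_y = \exp\big(\tfrac{1}{|C|}\sum_y \beta_y\big)/\sum_y \exp\beta_y = \big(\sum_y \exp\beta_y\big)^{-1}$, using $\sum_y \beta_y = 0$; hence $\log(\tilde P_{x,C}/\bar{\tilde P}_C) = \beta_x$, recovering the $\beta$'s. Thus the forward map is a bijection from probability vectors on $C$ onto $H$, which is the claimed one-to-one correspondence.

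The only step needing any care is identifying the exact constraint that characterizes admissible $\beta$ tuples, namely $\sum_{x\in C}\beta_{x,C}=0$; without it the softmax would not literally invert the forward map. Everything else is a routine manipulation of logs and exponentials, so I anticipate no real obstacle. This fact is essentially the bookkeeping that lets subsequent arguments move freely between the probability representation and the log-ratio representation on a fixed choice set.
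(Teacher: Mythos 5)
Your proof is correct and follows essentially the same route as the paper's: the forward map is the definition of $\beta_{x,C}$, and the inverse is the softmax, verified by the identity $\exp(\beta_{x,C})/\sum_{y\in C}\exp(\beta_{y,C}) = P_{x,C}$. The only difference is that you are more explicit about the codomain, pinning down the constraint $\sum_{x\in C}\beta_{x,C}=0$ and checking both compositions, which the paper leaves implicit; this is a harmless (indeed slightly more careful) elaboration rather than a different argument.
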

\begin{proof} Uniquely find $\beta_{x,C} \ \forall x \in C$ using the mapping in equation~\eqref{eqn:betaxC}. Now, for the other direction, observe that $\frac{\exp\beta_{x,C}}{\sum_{y \in C}\exp\beta_{y,C}} = \frac{P_{x,C}/\bar{P_{C}}}{\sum_{y \in C}P_{y,C}/\bar{P_{C}}} = P_{x, C} \ \forall x \in C$.
\end{proof}
Hence, statements regarding identifiability between CDM parameters and the $\beta$'s can be mapped to statements about identifiability between CDM parameters and probabilities. It will also be much easier to relate differences in CDM parameters of the following pattern, $u_{xy} - u_{yx}$ and $u_{xz} - u_{yz}$ $\forall x \neq y \neq z$, to the $\beta$'s. Because CDM is shift invariant, these differences between parameters uniquely identify the parameters when the offset constraint \eqref{offset_eq} is applied. 
\begin{fact}\label{diff_maps_params}
	Under the offset constraint \eqref{offset_eq}, CDM parameter differences $u_{xy} - u_{yx}$ and $u_{xz} - u_{yz}$, $\forall x \neq y \neq z$, have a 1-to-1 mapping with CDM parameters $u_{xy}$ $\forall x \neq y$.
\end{fact}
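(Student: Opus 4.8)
\emph{Proof plan.} The statement is a bijection, and the forward direction is immediate: from any parameter array $\{u_{xy}\}_{x\neq y}$ one simply reads off the quantities $u_{xy}-u_{yx}$ and $u_{xz}-u_{yz}$. So the entire content is injectivity, i.e.\ showing that the collection of these differences, together with the normalization \eqref{offset_eq}, pins down every $u_{xy}$ uniquely. I would organize the recovery column by column. Fix a column index $z$ and a reference item $\rho(z)\in\exx\setminus\{z\}$ (say the least element of $\exx\setminus\{z\}$). For each $x\in\exx\setminus\{z\}$ the value $r_{xz}:=u_{xz}-u_{\rho(z)z}$ is one of the listed differences (take $y=\rho(z)$), hence known, with $r_{\rho(z)z}=0$. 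Thus within column $z$ we have $u_{xz}=r_{xz}+a_z$ for a single unknown scalar $a_z:=u_{\rho(z)z}$.

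Next I would tie the columns together using the differences of the form $u_{xy}-u_{yx}$. For any $x\neq y$ this difference equals $(r_{xy}+a_y)-(r_{yx}+a_x)$, so every gap $a_y-a_x$ is determined; fixing a global reference $x_0$ and writing $a_z=b_z+c$ with $b_z:=a_z-a_{x_0}$ known (and $b_{x_0}=0$), the whole array is now determined up to the single scalar $c$. Finally I would use \eqref{offset_eq} to fix $c$: substituting $u_{xz}=r_{xz}+b_z+c$ gives $\sum_{z\in\exx\setminus x}u_{xz}=s_x+(n-1)c$ with $s_x:=\sum_{z\neq x}(r_{xz}+b_z)$ known, so the constraint becomes $e^{(n-1)c}\sum_{x\in\exx}e^{s_x}=1$. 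The left side is continuous and strictly increasing in $c$ with range $(0,\infty)$, so there is exactly one admissible $c$, namely $c=-\tfrac{1}{n-1}\log\!\big(\sum_{x\in\exx}e^{s_x}\big)$. This recovers all $u_{xy}$, which completes injectivity and hence the bijection.

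I expect the only mild obstacle to be bookkeeping: one must check that the column reference $\rho(z)$ is always a legitimate item (distinct from $z$, so $u_{\rho(z)z}$ is defined) and that each difference invoked is genuinely among those listed ($x\neq y\neq z$, all distinct, for the within-column ones). This is routine once a consistent choice of references is made, and a quick parameter count corroborates it: the within-column differences contribute $n(n-2)$ constraints, the cross differences $n-1$ more, and \eqref{offset_eq} one final one, for a total of $n(n-1)-1$, matching the $n(n-1)$ raw parameters minus the one shift invariance removed by the normalization — so the system is exactly determined, neither over- nor under-constrained.
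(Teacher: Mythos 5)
Your proof is correct, and it rests on the same two underlying facts as the paper's — the listed differences determine $u$ up to a single additive constant, and the offset constraint \eqref{offset_eq} then fixes that constant — but the execution is genuinely different. The paper's proof is one compressed algebraic identity: it adds $\frac{1}{n-1}\log\big(\sum_{w}\exp(\sum_{z\neq w}u_{wz})\big)$, which is zero by \eqref{offset_eq}, to $u_{xy}$, and then rewrites the resulting log-sum-exp entirely in terms of the differences $u_{wy}-u_{xy}$, $u_{yz}-u_{zy}$, and $u_{wz}-u_{yz}$, yielding an explicit closed-form expression for $u_{xy}$ as a function of the listed differences. Your staged reconstruction (each column recovered up to a scalar $a_z$ from the $u_{xz}-u_{yz}$ differences, the scalars tied together up to one global constant by the $u_{xy}-u_{yx}$ differences, and the final constant pinned by \eqref{offset_eq}) makes the degrees-of-freedom accounting transparent and is arguably easier to verify; the paper's version buys a one-line formula at the cost of readability. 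One small slip in your closing sanity check: the three contributions $n(n-2)$, $n-1$, and $1$ sum to $n(n-1)$, not $n(n-1)-1$, and this is exactly the number of raw parameters — the differences alone supply $n(n-1)-1$ independent constraints and \eqref{offset_eq} supplies the last one. This arithmetic aside does not affect the validity of the argument.
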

\begin{proof} It is immediately obvious that given the parameters, we can uniquely construct the differences. For the other direction, consider that
\begin{align*}
u_{xy} &= u_{xy} + \frac{1}{n-1} \log\Big(\sum_{w \in \exx} \exp\Big(\sum_{z \in \exx \setminus z} u_{wz} \Big)\Big) \\
&= \frac{1}{n-1} \log\Big(\sum_{w \in \exx} \exp\Big(\sum_{z \in \exx \setminus w} u_{wz} - u_{xy} \Big)\Big) \\
&= \frac{1}{n-1} \log\Big(\sum_{w \in \exx} \exp\Big([u_{wy} - u_{xy}]\textbf{1}(w \neq y) + \sum_{z \in \exx \setminus {w,y}} u_{wz} - u_{xy} \Big)\Big) \\
&= \frac{1}{n-1} \log\Big(\sum_{w \in \exx} \exp\Big([u_{wy} - u_{xy}]\textbf{1}(w \neq y) + \sum_{z \in \exx \setminus {w,y}} [u_{zy} - u_{xy}] + [u_{yz} - u_{zy}] + [u_{wz} - u_{yz}] \Big)\Big).
\end{align*}
Here the first equality follows because the second term on the right hand size is 0, by the offset constraint \eqref{offset_eq}. The remaining equalities are simply algebraic manipulations. The last equality is purely a function of differences following the aforementioned statement, therefore proving the claim. 
\end{proof}
Hence, statements regarding identifiability between CDM parameter differences of the pattern $u_{xy} - u_{yx}$ and $u_{xz} - u_{yz}$ $\forall x \neq y \neq z$ and the $\beta$'s can be mapped to statements about identifiability between CDM parameters and probabilities.  

We now link the above facts with the following: the $\beta$'s can be conveniently represented in terms of these CDM parameter differences. Using $u\in \mathbb{R}^{n(n-1)}$ to refer to a vectorization of the parameters, with elements of the vector indexed as we have so far (i.e.,~$u_{xy}$ finds the subset of $(n-1)$ entries associated with item $x$, and finds the contextual role of item $y$ within those entries), we have the following fact.

\begin{fact}\label{beta_param_diff}
For any set $C$ and any $x \in C$,
	$\beta_{x,C} = \frac{1}{|C|}\sum_{y \in C \setminus x}\Big([u_{xy}-u_{yx}] + \sum_{z \in C \setminus \{x,y\}}[u_{xz} - u_{yz}]\Big).$
\end{fact}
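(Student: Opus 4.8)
The plan is a direct computation starting from the definitions, with the only care needed being bookkeeping of index sets. First I would unwind the definition: since $\bar{P_C} = (\prod_{y\in C}P_{y,C})^{1/|C|}$, taking logs gives $\log \bar{P_C} = \frac{1}{|C|}\sum_{y\in C}\log P_{y,C}$, so that
\begin{align*}
\beta_{x,C} = \log P_{x,C} - \frac{1}{|C|}\sum_{y \in C}\log P_{y,C}.
\end{align*}
Next I would substitute the CDM form $\log P_{x,C} = u(x\mid C) - \log\big(\sum_{w\in C}\exp(u(w\mid C))\big)$, with $u(x\mid C) = \sum_{z\in C\setminus x}u_{xz}$. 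The log-normalization term is identical for every item in $C$, so it cancels in the expression above, leaving the purely affine quantity
\begin{align*}
\beta_{x,C} = u(x\mid C) - \frac{1}{|C|}\sum_{y\in C}u(y\mid C) = \frac{1}{|C|}\sum_{y\in C\setminus x}\big(u(x\mid C) - u(y\mid C)\big),
\end{align*}
where the last equality drops the $y=x$ summand, which vanishes.

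It then remains to rewrite each pairwise difference $u(x\mid C) - u(y\mid C)$ in terms of the parameter differences appearing in the claim. Here I would split off the ``cross'' terms: in $u(x\mid C) = \sum_{z\in C\setminus x}u_{xz}$ the index $y$ occurs (since $y\in C\setminus x$), so $u(x\mid C) = u_{xy} + \sum_{z\in C\setminus\{x,y\}}u_{xz}$; symmetrically $u(y\mid C) = u_{yx} + \sum_{z\in C\setminus\{x,y\}}u_{yz}$. Subtracting yields $u(x\mid C) - u(y\mid C) = [u_{xy}-u_{yx}] + \sum_{z\in C\setminus\{x,y\}}[u_{xz}-u_{yz}]$, and substituting this into the sum over $y\in C\setminus x$ gives exactly the stated identity.

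I do not expect any real obstacle: the argument is elementary once one notices that the normalization constant cancels inside $\beta_{x,C}$ by construction (the geometric-mean normalization is precisely what makes $\beta_{x,C}$ a function of the utilities alone). The only place to be careful is the index split in the final step, making sure that $y$ is excluded from the residual sum for the $x$-utility and $x$ is excluded from the residual sum for the $y$-utility, so that both residual sums range over the same set $C\setminus\{x,y\}$ and combine termwise.
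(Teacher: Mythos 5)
Your proposal is correct and follows essentially the same route as the paper: expand $\beta_{x,C}$ from its definition, observe that the log-normalization cancels, and rearrange $u(x\mid C) - \tfrac{1}{|C|}\sum_{y\in C}u(y\mid C)$ into the stated parameter differences. The paper compresses the final rearrangement into a single line, whereas you spell out the index-splitting explicitly; the content is identical.
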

\begin{proof}
From the definition of $\beta_{x,C}$ in equation \eqref{eqn:betaxC} we have:
\begin{align*}
    \beta_{x,C} &= \log(\frac{P_{x,C}}{\bar{P_{C}}})\\ 
    &= \sum_{z \in C \setminus x} u_{xz} - \frac{1}{|C|}\sum_{y \in C} \sum_{z \in C \setminus y} u_{yz}\\ 
    &= \frac{1}{|C|}\sum_{y \in C \setminus x}\Big([u_{xy}-u_{yx}] + \sum_{z \in C \setminus \{x,y\}}[u_{xz} - u_{yz}]\Big)\\
\end{align*}
Here the final equality is a rearrangement of terms into the parameter differences of interest.  
\end{proof}
We introduce an indicator vector $g_{x,C} \in \mathbb{Z}^{n(n-1)}$ that contains non-zero values at the relevant indices of $u$ so that the final equality can be rewritten as
\begin{align}\label{indicator_g_def}
    \frac{1}{|C|}\sum_{y \in C \setminus x}\Big([u_{xy}-u_{yx}] + \sum_{z \in C \setminus \{x,y\}}[u_{xz} - u_{yz}]\Big) = \frac{1}{|C|} g_{x,C}^T u.
\end{align}
Lastly, we state and prove the following lemma, which will serve as the departure point for the three proofs.
Consider a collection $\uniqueD$ of unique subsets of the universe $\mathcal{X}$ of sizes 2 or greater, and let $\Omega = \sum_{C \in \uniqueD} |C|$ be the sum of the sizes of all the sets.
We then refer to a system design matrix $G(\uniqueD) \in \mathbb{Z}^{\Omega \times n(n-1)}$ as the linear system relating the parameters $u$ to the scaled log probability ratios $|C|\beta_{x,C}$. We construct such a matrix by concatenating, for each set $C \in \uniqueD$, for every item $x \in C$, the indicator vector $g_{x,C}^T$, as defined in \eqref{indicator_g_def}, as a row.
\begin{lemma}\label{lemma:rank_condition}
The full rank CDM is identifiable up to a shift for collection $\uniqueD$ iff $\text{rank}(G(\uniqueD)) = n(n-1) - 1$.
\end{lemma}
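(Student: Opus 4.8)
The plan is to turn the identifiability question into a statement about the kernel of the linear map $u \mapsto G(\uniqueD)\,u$, and then read off the rank condition by rank--nullity.

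First I would set up the reduction. By definition, $\uniqueD$ identifies the full-rank CDM up to a shift iff any two parameter vectors inducing the same distributions $\{P_{x,C} : x \in C\}$ for every $C \in \uniqueD$ differ only by an additive constant $c\,\textbf{1}$. By Fact~\ref{unique_beta_map_prob}, for each $C$ the family $\{P_{x,C} : x \in C\}$ is in bijection with the family of log-ratios $\{\beta_{x,C} : x \in C\}$; by Fact~\ref{beta_param_diff} together with \eqref{indicator_g_def}, each scaled log-ratio is the fixed linear functional $|C|\,\beta_{x,C} = g_{x,C}^T u$. Stacking these functionals over all $C \in \uniqueD$ and all $x \in C$ is, by construction, multiplication by $G(\uniqueD)$, so the distributions induced on $\uniqueD$ are determined by, and in turn determine, the vector $G(\uniqueD)\,u$. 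Hence $u$ and $u'$ induce the same distributions on $\uniqueD$ iff $G(\uniqueD)(u - u') = 0$, i.e. $u - u' \in \ker G(\uniqueD)$, and therefore $\uniqueD$ identifies the CDM up to a shift iff $\ker G(\uniqueD) \subseteq \text{span}(\textbf{1})$. (Fact~\ref{diff_maps_params} is the bookkeeping that certifies this is the right quotient: pinning down the differences $u_{xy}-u_{yx}$ and $u_{xz}-u_{yz}$, together with the normalization \eqref{offset_eq}, is the same as pinning down $u$, so an additive constant is the only slack the CDM can carry.)

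Next I would show that $\textbf{1}$ always lies in $\ker G(\uniqueD)$. Each row $g_{x,C}$ is, by \eqref{indicator_g_def}, an integer combination of elementary patterns of the form "$+1$ at index $(x,y)$ and $-1$ at index $(y,x)$" and "$+1$ at $(x,z)$ and $-1$ at $(y,z)$", each of which has coordinate sum zero; hence every row of $G(\uniqueD)$ sums to zero and $G(\uniqueD)\,\textbf{1} = 0$. (Equivalently, this is just the shift-invariance of the CDM choice probabilities.) Consequently $\text{span}(\textbf{1}) \subseteq \ker G(\uniqueD)$ always, so $\text{rank}\,G(\uniqueD) \le n(n-1) - 1$.

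Finally I would combine the two steps: identifiability up to a shift means $\ker G(\uniqueD) \subseteq \text{span}(\textbf{1})$, which, given the reverse inclusion from the previous step, is equivalent to $\ker G(\uniqueD) = \text{span}(\textbf{1})$, i.e. $\dim \ker G(\uniqueD) = 1$; since $G(\uniqueD)$ has $n(n-1)$ columns, rank--nullity gives $\dim \ker G(\uniqueD) = 1 \iff \text{rank}\,G(\uniqueD) = n(n-1) - 1$, as claimed. The only genuinely delicate point is the reduction in the first paragraph: verifying that nothing beyond the linear system $G(\uniqueD)\,u = b$ is recoverable from the induced distributions, and that the paper's notion of "identifiable up to a shift" coincides with the quotient by $\ker G(\uniqueD)$. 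Once Facts~\ref{unique_beta_map_prob}--\ref{beta_param_diff} are in hand this is routine, and everything after it is elementary linear algebra.
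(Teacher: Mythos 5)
Your proposal is correct and follows essentially the same route as the paper: both arguments reduce identifiability to the linear system $|C|\beta_{x,C} = g_{x,C}^T u$ via Facts~\ref{unique_beta_map_prob} and~\ref{beta_param_diff}, observe that $\textbf{1}$ always lies in the null space of $G(\uniqueD)$, and conclude that identifiability up to a shift holds exactly when the null space is no larger than $\text{span}(\textbf{1})$. The paper phrases the conclusion as two separate implications (rank equal to $n(n-1)-1$ implies identifiable; rank strictly smaller yields a non-trivial null vector and hence two distinct parameter vectors with the same probabilities), whereas you package both directions into a single rank--nullity statement, but the underlying argument is the same.
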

\begin{proof}
Clearly, $\text{rank}(G(\uniqueD)) \leq n(n-1)-1$, due to the shift invariance of $u$. That is, $G$ is only specified in terms of differences of elements in $u$, and hence $\text{null}(G(\uniqueD)) \ni \textbf{1}$.  

Suppose first that $\text{rank}(G(\uniqueD)) = n(n-1) - 1$. Then, for two vectors $u_1, u_2 \in \mathbb{R}^{n(n-1)}$, if $u_1 \neq \alpha \textbf{1} + u_2$ for any $\alpha \in \mathbb{R}$ then $\beta_1 = \mathbf{C}^{-1}Gu_1 \neq Gu_2 = \mathbf{C}^{-1}\beta_2$, where $\mathbf{C}^{-1} \in \mathbb{R}^{\Omega \times \Omega}$ is the diagonal matrix with values are $\frac{1}{|C|}, \forall C \in \uniqueD$ (which undoes the scaling of the scaled log probability ratios). Since Fact $\ref{unique_beta_map_prob}$ states that $\beta$'s have a unique mapping with the choice system probabilities over the collection $\uniqueD$, $u$ vectors are identifiable up to a shift for a given set of probabilities over the collection $\uniqueD$. 

Suppose now that $\text{rank}(G(\uniqueD)) < n(n-1) - 1$. Then, there exists some vector $v \in \text{null}(G(\uniqueD)), v \neq \alpha \textbf{1}$ for any $\alpha$, for which $\mathbf{C}^{-1}G(\uniqueD)(u_1) = \mathbf{C}^{-1}G(\uniqueD)(u_1 + v)$. Again since the $\beta$'s uniquely map to the probabilities, there exist two $u$ vectors different beyond a shift that map to the same set of choice system probabilities. Hence, $u$ is not identifiable up to a shift.
\end{proof}

We add as an additional note that under the offset constraint \eqref{offset_eq}, the CDM parameters are uniquely identifiable, following the analysis of Fact \ref{diff_maps_params}. Now we proceed to proving the individual theorems, each of which essentially boils down to analyzing the rank of the system design matrix $G(\uniqueD)$ of collections $\uniqueD$ comprised of sets of a single size, of collections $\uniqueD$ comprised of sets of multiple sizes, and formalizing the calculation of $G(\uniqueD)$ for a given dataset.

\subsection{Proof of Theorem \ref{thm:two_sets_good}}

{\bf Proof.}
It is sufficient to show that the statement holds for the full rank case, as further constraining the parameters using rank conditions does not affect identifiability. Note that the statement of the theorem is a sufficient condition for identifiability, and for low-rank CDMs in particular it is possibly an overly strong requirement.

Consider two different subset sizes $s$ and $t$, and assume wlog that $t$ is within $[3,n-1]$. For any $\{x,y\}$, consider $C_{wz} \ni \{x,y\}$, $|C_{wz}| = t-1$, indexed by items $\{w,z\} \in \mathcal X$, $\{w,z\} \notin C_{wz}$. Let $A_{wz} = C_{wz} \cup \{w\}$ and $B_{wz} = C_{wz} \cup \{z\}$. Using $\beta_{xy}^C$ as shorthand for  $\beta_{x,C} - \beta_{y,C}$., we have that
\begin{align*}
\beta_{xy}^{A_{wz}} - \beta_{xy}^{B_{wz}} = [u_{xw} - u_{yw}] - [u_{xz} - u_{yz}].
\end{align*}

Now, if $s < t$, Take $D \ni \{x,y\}$ of size $s$ and $A$ (of size $t$) such that $D \subset A$. Now,
\begin{align*}
\beta_{xy}^{A} - \beta_{xy}^{D} = \sum_{q \in A\setminus D} [u_{xq} - u_{yq}].
\end{align*}
Then, we can solve for $[u_{xw} - u_{yw}]$ as follows:
\begin{align*}
[u_{xw} - u_{yw}] = \frac{1}{t-s}(\beta_{xy}^{A} - \beta_{xy}^{D} + \sum_{q \in A\setminus D} \beta_{xy}^{A_{wq}} - \beta_{xy}^{B_{wq}}).
\end{align*}
With this relation we see that $[u_{xy} - u_{yx}] = \beta_{xy}^{A} - \sum_{q \in A\setminus \{x,y\}} [u_{xq} - u_{yq}]$.

If $s > t$, Take $D$ of size $s$ such that $A \subset D$. We then see that
$\beta_{xy}^{D} - \beta_{xy}^{A} = \sum_{q \in D\setminus A} [u_{xq} - u_{yq}]$, and as before, we can solve for $[u_{xw} - u_{yw}]$ as: 
\begin{align*}
[u_{xw} - u_{yw}] = \frac{1}{s-t}(\beta_{xy}^{D} - \beta_{xy}^{A} + \sum_{q \in D\setminus A} \beta_{xy}^{A_{wq}} - \beta_{xy}^{B_{wq}}).
\end{align*}
With this relation we see that $[u_{xy} - u_{yx}] = \beta_{xy}^{D} - \sum_{q \in D\setminus \{x,y\}} [u_{xq} - u_{yq}]$.

Applying Facts \ref{unique_beta_map_prob} and \ref{diff_maps_params}, statements regarding identifiability between CDM parameter differences of the pattern $u_{xy} - u_{yx}$ and $u_{xz} - u_{yz}$ $\forall x \neq y \neq z$ and the $\beta$'s can be mapped to statements about identifiability between CDM parameters and probabilities. We then conclude that the CDM parameters can be uniquely recovered from probabilities over two choice sets. Thus, comparisons over all choice sets of two sizes uniquely identify the CDM.
\qed

\subsection{Proof of Theorem \ref{thm:single_set_bad}}

{\bf Proof.}
To prove this claim, we separately consider three conditions on the set size s: $s = 2$, $s = n$, and $3 \leq s \leq n-1$. For each case, we first demonstrate the result for the full rank CDM and then show that every low rank CDM suffers from the same problem. 

In terms of notation, we consider a $U$ ``matrix'', $U \in \mathbb{R}^{n \times n}$, organizing the parameters $u_{xy}$, $\forall x \neq y$, with the matrix diagonal taking on arbitrary unused values. For the low rank case, the $U$ matrix is the dot product of the matrix of target vectors $T \in \mathbb{R}^{n \times r}$ and the matrix of context vector $C \in \mathbb{R}^{n \times r}$. Here, the diagonal formed by $t_x\cdot c_x$ can be arbitrary and is unused. We also use $\beta_{xy}^C$ as shorthand for  $\beta_{x,C} - \beta_{y,C}$.

\subsubsection*{{\it (i)} $s=2$}
For any pair $C = \{x,y\}$, $\beta_{xy}^C = u_{xy} - u_{yx}$. Thus, increasing both $u_{xy}$ and $u_{yx}$ by the same value leaves the pairwise probabilities unchanged. Thus the CDM parameter $U$ matrix is only specified up to a symmetric matrix $A$, where $U + A$ produces the same pairwise probabilities as $U$. 

Any rank $r$ matrix also suffers from the same identifiability issue: consider $T + B$ and $C + F$, where $B = \beta C + \gamma_1 \alpha \beta T$, and $F = \alpha T + \gamma_2 \alpha \beta C$ for $\alpha,\beta \in \mathbb{R}$, $\gamma_1, \gamma_2 \in \{0,1\}, \gamma_1 \neq \gamma_2$. These scalar parameters form a subset of perturbations that modify the dot product $U = TC^T$ only by a symmetric matrix, thereby leaving the pairwise probabilities unchanged. 

\subsubsection*{{\it (ii)} $s=n$}
For the full universe $\exx$, $\beta_{xy}^\exx = u_{xy} - u_{yx} + \sum_{z \in \exx \setminus \{x,y\}} u_{xy} - u_{yx}$. Consider then any matrix $A \in \mathbb{R}^{n\times n}$ that has $(A - \text{diag(A)})\textbf{1} = g\textbf{1}$, where $g$ is a constant and $\textbf{1} \in \mathbb{R}^{n \times 1}$ is the vector of all ones. This is, any matrix $A$ where the rows (not including the diagonal) all sum to the same constant.  Then $U$ and $U+A$ have the same choice probabilities on the full universe set. 

For the identifiability problem to transfer to the rank $r$ case, we find $T + \gamma_1 B$ and $C + \gamma_2 F$ where $\gamma_1, \gamma_2 \in \{0,1\}, \gamma_1 \neq \gamma_2$ such that the perturbation to a $U$ matrix follows the same properties as the matrix $A$ in the full rank case above. We show how to find such a matrix for the rank $1$ case, which is sufficient for all rank $r$. Consider $U = tc^T$, where $t,c \in \mathbb{R}^{n \times 1}$. We may perturb $t$ by a vector $b \in \mathbb{R}^{n \times 1}$ where $b_x = \frac{g}{(c^T\textbf{1} - c_x)}$, $\forall x$, for any constant $g$, as long as $(c^T\textbf{1} - c_x) \neq 0$ $\forall x$. In case $(c^T\textbf{1} - c_y) = 0$ for any $y$, set $g = 0$, $b_x = 0 \ \forall x \neq y$, and $b_y$ to any arbitrary value. The perturbation to $U$ is then $bc^T$, and we leave the reader to verify $((bc^T) - \text{diag}(bc^T))\textbf{1} = g\textbf{1}$, thereby not changing the universe probabilities. Similarly, we may perturb $c$ by a vector $f$, where $f_x = g[\frac{1}{n-1} \sum_{z} (\frac{1}{t_z}) - \frac{1}{t_x}]$ if $t_x \neq 0$, $\forall x$. In case $t_y = 0$ for some $y$, set $g = 0$, $f_x = 0$, $\forall x \neq y$, and $f_y$ to any arbitrary value. The perturbation to $U$ is then $t^Tf$, and we have $((tf^T) - \text{diag}(tf^T))\textbf{1} = g\textbf{1}$, thereby not changing the universe probabilities.
\subsubsection*{{\it (iii)} $3 \leq s \leq n-1$}
For all other set sizes, we again show the identifiability issue for the full rank case, and show that the null space in the parameters also transfers over to the rank $r$ case. Consider any $C \ni \{x,y\}$, $\{w,z\} \notin C$ of size $s-1$ for any $\{x,y,w,z\}$. Take $C_w = C \cup \{w\}$, and $C_z = C \cup \{z\}$. Note that we can always identify such sets because we are in the size regime $3 \leq s \leq n-1$. Then, $\beta_{xy}^{C_w} - \beta_{xy}^{C_z} = [u_{xw} - u_{yw}] - [u_{xz} - u_{yz}]$. Thus, given $[u_{xz} - u_{yz}]$ for a single $z$, we can set $[u_{xw} - u_{yw}] = \beta_{xy}^{C_w} - \beta_{xy}^{C_z} + [u_{xz} - u_{yz}]$, and set $[u_{xy} - u_{yx}] = \beta_{xy}^{C_z} - \sum_{q \in {C_z}\setminus \{x,y\}}[u_{xq} - u_{yq}] = \beta_{xy}^{C_z} - \sum_{q \in {C_z}\setminus \{x,y\}}[\beta_{xy}^{C_z} - \beta_{xy}^{C_q}] - (s-2)[u_{xz} - u_{yz}]$ to keep the choice probabilities unchanged.
This invariance implies that the $U$ matrix can be perturbed by the rank-$1$ matrix $a\textbf{1}^T$ where $a \in \mathbb{R}^{n \times 1}$ is any vector and the choice probabilities are unchanged. 

We can now show that such perturbations to $U$ can be produced in the rank $r$ case by modifiying $C$. Consider $C + \textbf{1}b^T$ where $b \in \mathbb{R}^{r \times 1}$. Then, $U = T(C + \textbf{1}b^T)^T = TC^T + (Tb)\textbf{1}^T$, which is a perturbation to $U$ of the proper form.
Through these three cases, we have now shown that every rank $r$ CDM cannot be uniquely identified even when provided all comparisons of a single choice set size. 
\qed

\subsection{Proof of Theorem \ref{thm:rank_test}}

{\bf Proof.}
Consider a dataset of the form $\mathcal{D} = \lbrace (x_j, C_j) \rbrace_{j=1}^m$ of a decision maker making choices: a datapoint $j$ represents a decision scenario, and contains $C_j$, the context provided in that decision, and $x_j \in C_j$, the item chosen in the context. 
Recall that $\Omega_\mathcal{D} = \sum_{j=1}^m |C_j|$. Construct then a matrix $G(\mathcal{D}) \in \mathbb{Z}^{\Omega_\mathcal{D} \times n(n-1)}$ by concatenating, for every datapoint $j$, for every item $x \in C_j$, the indicator vector $g_{x,C_j}^T$ as defined in equation \eqref{indicator_g_def} as a row.
Denoting $\uniqueD$ as the collection of unique choice sets in dataset $\mathcal{D}$, it is clear that $\text{rank}{(G(\mathcal{D}))} = \text{rank}{(G(\uniqueD))}$, where the latter matrix is defined as in Lemma \ref{lemma:rank_condition} for the collection $\uniqueD$. This equality of ranks follows from the fact that the set of unique rows of $G(\mathcal{D})$ are the same as those in $G(\uniqueD)$, and repeated rows do not change the rank of a matrix. Thus, we can directly test whether a dataset results in an identifiable CDM by testing the rank of $G(\mathcal{D})$.
\qed


\clearpage

\section{Convergence proof} 
\label{app:convergence}

We restate and then prove Theorem~\ref{thm:CDM_bound}.

{\bf Theorem~\ref{thm:CDM_bound}.}
\emph{
Let $u^\star$ denote the true CDM model from which data is drawn. Let $\hat{u}_{\text{MLE}}$ denote the maximum likelihood solution. Assume $\uniqueD$ identifies the CDM. For any $u^\star \in \mathcal{U_B} = \lbrace u \in \mathbb{R}^d : \left\lVert u\right\rVert_\infty \leq B, \textbf{1}^Tu = 0 \rbrace$, and expectation taken over the dataset $\mathcal{D}$ generated by the CDM model,
$$
\mathbb{E}\big[\left\lVert 
\hat{u}_{\text{MLE}}(\mathcal D) - u^\star \right\rVert_2^2\big] 
\leq 
c_{B,k_{\text{max}}}\frac{d-1}{m},
$$
where $k_{\text{max}}$ refers to the maximum choice set size in the dataset, and $c_{B,k_{\text{max}}}$ is a constant that depends on $B$, $k_{\text{max}}$ and the spectrum of the design matrix $G(\mathcal{D})$. 
}

{\bf Proof.}
We describe the sampling process as follows using the same notation as before. Given some true CDM $u^\star \in \mathcal{U}_B$, for each datapoint $j \in [m]$ we have the probability of choosing item $x$ from set $C_j$ as 
\begin{align*}
    \mathbb{P}(y_j = x | u^\star, C_j) = \frac{\exp(\sum_{z \in C_j \setminus x} u^\star_{x z})}{\sum_{y \in C_j} \exp(\sum_{z \in C_j \setminus y} u^\star_{yz}))}.
\end{align*}

We now introduce notation that will let us represent the above expression in a more compact manner. Because our datasets involve choice sets of multiple sizes, we use $k_j \in [k_{\text{min}}, k_{\text{max}}]$ to denote the choice set size for datapoint $j$. Extending a similar concept in \cite{shah2016estimation} to the multiple set sizes, and the more complex structure of the CDM, we then define matrices $E_{j,k_j} \in \mathbb{R}^{d \times k_j}, \ \forall j \in [m]$ as follows: $E_{j,k_j}$ has a column for every item $y \in C_j$ (and hence $k_j$ columns), and the column corresponding to item $y \in C_j$ has a one at the position of each $u_{yz}$ for $z \in C_j \setminus y$, and zero otherwise. This construction allows us to write the familiar expressions $\sum_{z \in C_j \setminus y} u_{yz}$, for each $y$, simply as a single vector-matrix product $u^T E_{j,k_j} = [\sum_{z \in C_j \setminus y_1} u_{y_1z}, \sum_{z \in C_j \setminus y_2} u_{y_2z}, \ldots \sum_{z \in C_j \setminus y_{k_j}} u_{y_{k_j}z}] \in \mathbb{R}^{1 \times k_j}$. 

Next, we define a collection of functions $F_k : \mathbb{R}^{k} \mapsto [0,1]$, $\forall k \in [k_{\text{min}}, k_{\text{max}}]$ as 
\begin{align*}
F_k([x_1,x_2,\ldots,x_k]) = \frac{\exp(x_1)}{\sum_{l=1}^k \exp(x_l)},
\end{align*}
where the numerator always corresponds to the first entry of the input. These functions $F_k$ have several properties that will become useful later in the proof. First, it is easy to verify that all $F_k$ are shift-invariant, that is, $F_k(x) = F_k(x + c\mathbf{1})$, for any scalar $c$. Next, we show that all $F_k$ are strongly log-concave, that is, $\nabla^2(-\log(F_k(x))) \succeq H_k$ for some $H_k \in \mathbb{R}^{k \times k}$, $\lambda_2(H_k) > 0$. The proof for this property stems directly from its counterpart in \cite{shah2016estimation}, as multiple set sizes does not affect the result. We compute the Hessian as:
\begin{align*}
\nabla^2(-\log(F_k(x))) = \frac{\exp(x_1)}{(\langle \exp(x), 1 \rangle)^4}(\langle \exp(x), 1 \rangle \text{diag}(\exp(x)) - \exp(x)\exp(x)^T),
\end{align*}
where $\exp(x) = [e^{x_1}, \ldots, e^{x_k}]$. Note that
\begin{align*}
v^T\nabla^2(-\log(F_k(x)))v &= \frac{\exp(x_1)}{(\langle \exp(x), 1 \rangle)^4}v^T(\langle \exp(x), 1 \rangle \text{diag}(\exp(x)) - \exp(x)\exp(x)^T)v\\ 
&= \frac{\exp(x_1)}{(\langle \exp(x), 1 \rangle)^4}(\langle \exp(x), 1 \rangle \langle \exp(x), v^2 \rangle - \langle \exp(x), v \rangle^2) \\
&\geq 0,
\end{align*}
where $v^2$ refers to the element-wise square operation on vector $v$. While the final inequality is an expected consequence of the positive semidefiniteness of the Hessian, we note that it also follows from an application of Cauchy-Schwarz to the vectors $\sqrt{\exp(x)}$ and $\sqrt{\exp(x)} \odot v$, and is thus an equality \textit{if and only if} $v \in \text{span}(\mathbf{1})$. Thus, we have that the smallest eigenvalue $\lambda_1(\nabla^2(-\log(F_k(x)))) = 0$ is associated with the vector $\textbf{1}$, a property we expect from shift invariance, and that the second smallest eigenvalue $\lambda_2(\nabla^2(-\log(F_k(x)))) > 0$. 
Thus, we can state that 
\begin{align}\label{F_str_cvx}
    \nabla^2(-\log(F_k(x))) \succeq H_k = \beta_k(I - \frac{1}{k}\mathbf{1}\mathbf{1}^T),
\end{align}
where
\begin{align}\label{eqn:beta}
\beta_k \defeq \min_{x \in [-(k-1)B, (k-1)B]^k} \lambda_2(\nabla^2(-\log(F_k(x)))),
\end{align}
and it's clear that $\beta_k>0$. The minimization is taken over $x \in [-(k-1)B, (k-1)B]^k$ since each $x_i$ is a sum of $k-1$ values of the $u$ vector, each entry of which is in $[-B,B]$. We conclude that all $F_k$ are strongly log-concave.

As a final notational addition, in the same manner as \cite{shah2016estimation} but accounting for multiple set sizes, we define $k$ permutation matrices $R_{1,k},\ldots,R_{k,k} \in \mathbb{R}^{k, k}, \forall k \in [k_{\text{min}}, k_{\text{max}}]$, representing $k$ cyclic shifts in a fixed direction. That is, these matrices allow for the cycling of the entries of row vector $v \in \mathbb{R}^{1 \times k}$ so that any entry can become the first entry of the vector, for any of the relevant $k$. This construction allows us to represent any choice made from the choice set $C_j$ as the first element of the vector $x$ that is input to $F$, thereby placing it in the numerator.

Given the notation introduced above, we can now state the probability of choosing the item $x$ from set $C_j$ compactly as:
\begin{align*}
    \mathbb{P}(y_j = x | u^\star, C_j) = \mathbb{P}(y_j = x |u^\star, k_j, E_{j,k_j}) = F_{k_j}({u^\star}^T E_{j,k_j} R_{x,k_j}).
\end{align*}
We can then rewrite the full-rank CDM likelihood as
\begin{align*}
\sup_{u \in \mathcal{U}_B} \prod_{(x_j, k_j, E_{j,k_j}) \in \mathcal{D}} F_{k_j}(u^T E_{j,k_j} R_{x_j,k_j}),
\end{align*}
and the scaled negative log-likelihood as
\begin{align*}
\ell(u) = -\frac{1}{m}\sum_{(x_j, k_j, E_{j,k_j}) \in \mathcal{D}} \log(F_{k_j}(u^T E_{j,k_j} R_{x_j,k_j})) = -\frac{1}{m}\sum_{j=1}^m \sum_{i=1}^{k_j} \mathbf{1}[y_j = i] \log(F_{k_j}(u^T E_{j,k_j} R_{i,k_j})).
\end{align*}
Thus, 
\begin{align*}
    \hat{u}_\text{MLE} = \arg \max_{u \in \mathcal{U}_B} \ell(u).
\end{align*}

The compact notation makes the remainder of the proof a straightforward application of results from convex analysis: we first demonstrate that the scaled negative log-likelihood is strongly convex with respect to a semi-norm\footnote{A semi-norm is a norm that allows non-zero vectors to have zero norm.}, and we use this property to show the proximity of the MLE to the optimal point as desired. The remainder of the proof exactly mirrors that in \cite{shah2016estimation} with a few extra steps of accounting created by the multiple set sizes. The notable exception is in the definition of $L$, and conditions about its eigenvalues that tie back to the previous results about identifiability. While in \cite{shah2016estimation} there is a clear connection of $L$ to the graph Laplacian matrix of the item comparison graph, it is unclear here how to interpret $L$ as a graph Laplacian. 

First, we have the gradient of the negative log-likelihood as
\begin{align*}
\nabla \ell(u) = -\frac{1}{m}\sum_{j=1}^m \sum_{i=1}^{k_j} \mathbf{1}[y_j = i] E_{j,k_j} R_{i,k_j}\nabla\log(F_{k_j}(u^T E_{j,k_j} R_{i,k_j})),
\end{align*}
and the Hessian as
\begin{align*}
\nabla^2 \ell(u) = -\frac{1}{m}\sum_{j=1}^m \sum_{i=1}^{k_j} \mathbf{1}[y_j = i] E_{j,k_j} R_{i,k_j}\nabla^2\log(F_{k_j}(u^T E_{j,k_j} R_{i,k_j})) R_{i,k_j}^TE_{j,k_j}^T.
\end{align*}
We then have, for any vector $z \in \mathbb{R}^d$,
\begin{align*}
z^T\nabla^2 \ell(u)z &= -\frac{1}{m}\sum_{j=1}^m \sum_{i=1}^{k_j} \mathbf{1}[y_j = i] z^TE_{j,k_j} R_{i,k_j}\nabla^2\log(F_{k_j}(u^T E_{j,k_j} R_{i,k_j})) R_{i,k_j}^TE_{j,k_j}^Tz \\
&=\frac{1}{m}\sum_{j=1}^m \sum_{i=1}^{k_j} \mathbf{1}[y_j = i] z^TE_{j,k_j} R_{i,k_j}\nabla^2(-\log(F_{k_j}(u^T E_{j,k_j} R_{i,k_j}))) R_{i,k_j}^TE_{j,k_j}^Tz \\
&\geq \frac{1}{m}\sum_{j=1}^m \sum_{i=1}^{k_j} \mathbf{1}[y_j = i] z^TE_{j,k_j} R_{i,k_j}H_k R_{i,k_j}^TE_{j,k_j}^Tz \\
&= \frac{1}{m}\sum_{j=1}^m \sum_{i=1}^{k_j} \mathbf{1}[y_j = i] z^TE_{j,k_j} R_{i,k_j}\beta_{k_j}(I - \frac{1}{k_j}\mathbf{1}\mathbf{1}^T) R_{i,k_j}^TE_{j,k_j}^Tz \\
&\geq \beta_{k_\text{max}}\frac{1}{m}\sum_{j=1}^m \sum_{i=1}^{k_j} \mathbf{1}[y_j = i] z^TE_{j,k_j} (I - \frac{1}{k_j}\mathbf{1}\mathbf{1}^T)E_{j,k_j}^Tz \\
&=\beta_{k_\text{max}}\frac{1}{m}\sum_{j=1}^m z^TE_{j,k_j} (I - \frac{1}{k_j}\mathbf{1}\mathbf{1}^T)E_{j,k_j}^Tz.
\end{align*}
The first line follows from applying the definition of the Hessian. The second line follows from pulling the negative sign into the $\nabla^2$ term. The third and fourth line follow from \eqref{F_str_cvx}, strong log-concavity of all $F_k$. The fifth line follows from the pulling out $\beta_{k_j}$ and lower bounding it with $\beta_{k_\text{max}}$ and recognizing that $H_k$ is invariant to permutation matrices. The sixth line follows from removing the inner sum since the terms are independent of $i$.
Now, defining the matrix $L$ as 
\begin{align*}
    L = \frac{1}{m}\sum_{j=1}^m E_{j,k_j} (I - \frac{1}{k_j}\mathbf{1}\mathbf{1}^T)E_{j,k_j}^T,
\end{align*}
we first note a few properties of $L$. First, it is easy to verify that $L\mathbf{1} = 0$, and hence $\text{span}(\mathbf{1}) \subseteq \text{null}(L)$. Moreover, we now show that $\lambda_2(L) > 0$, that is, $\text{null}(L) \subseteq \text{span}(\mathbf{1})$. Consider the matrix $G(\mathcal{D})$ in Theorem~\ref{thm:rank_test}. Define a matrix $X(\mathcal{D}) = \mathbf{C}_\mathcal{D}^{-1}G(\mathcal{D})$, where $\mathbf{C}_\mathcal{D}^{-1} \in \mathbb{R}^{\Omega_\mathcal{D} \times \Omega_\mathcal{D}}$ is the diagonal matrix with values are $\frac{1}{k_j}$, for every datapoint $j$, for every item $x \in C_j$. Simple calculations show that, $$L = \frac{1}{m}X(\mathcal{D})^TX(\mathcal{D}) \succeq 0.$$ As a consequence of the properties of matrix rank, we then have that $\text{rank}(L) = \text{rank}(X(\mathcal{D})) = \text{rank}(G(\mathcal{D}))$. Thus, from Theorem \ref{thm:rank_test}, we have that if the dataset $\mathcal{D}$ identifies the CDM, $\text{rank}(L) = d-1$, and hence $\lambda_2(L) > 0$. With this matrix, we can write,
\begin{align*}
    z^T\nabla^2 \ell(u)z \geq \beta_{k_\text{max}}z^TLz = \beta_{k_\text{max}}||z||_L^2,
\end{align*}
which is equivalent to stating that $\ell(u)$ is $\beta_{k_\text{max}}$-strongly convex with respect to the $L$ semi-norm at all $u \in \mathcal{U}_B$. Since $u^\star, \hat{u}_{\text{MLE}} \in \mathcal{U}_B$, strong convexity implies that
\begin{align*}
    \beta_{k_\text{max}}||\hat{u}_{\text{MLE}} - u_{\star}||_L^2 \leq \ell(\hat{u}_{\text{MLE}}) - \ell(u^\star) - \langle \nabla \ell(u^\star), \hat{u}_{\text{MLE}} - u^\star \rangle.
\end{align*}
Further, we have
\begin{align*}
    \ell(\hat{u}_{\text{MLE}}) - \ell(u^\star) - \langle \nabla \ell(u^\star), \hat{u}_{\text{MLE}} - u^\star \rangle &\leq - \langle \nabla \ell(u^\star), \hat{u}_{\text{MLE}} - u^\star \rangle \\
    &\leq |(\hat{u}_{\text{MLE}} - u^\star)^T\nabla \ell(u^\star)| \\
    &= |(\hat{u}_{\text{MLE}} - u^\star)^T L^{\frac{1}{2}}{L^{\frac{1}{2}}}^\dagger \nabla \ell(u^\star)| \\
    &\leq ||L^{\frac{1}{2}}(\hat{u}_{\text{MLE}} - u^\star)||_2||{L^{\frac{1}{2}}}^\dagger\nabla \ell(u^\star)||_2 \\&
    = ||\hat{u}_{\text{MLE}} - u^\star||_L||\nabla \ell(u^\star)||_{L^\dagger}.
\end{align*}
Here the third line follows from the fact that $\textbf{1}^T(\hat{u}_{\text{MLE}} - u^\star) = 0$, and so $(\hat{u}_{\text{MLE}} - u^\star) \perp \text{null}(L)$, which also implies that $(\hat{u}_{\text{MLE}} - u^\star) \perp \text{null}(L^{\frac{1}{2}})$, and so $(\hat{u}_{\text{MLE}} - u^\star)L^{\frac{1}{2}}{L^{\frac{1}{2}}}^\dagger =(\hat{u}_{\text{MLE}} - u^\star)$. The fourth line follows from Cauchy-Schwarz. Thus, we can conclude that 
\begin{align*}
    ||\hat{u}_{\text{MLE}} - u^\star||_L^2 \leq \frac{1}{\beta_{k_\text{max}}^2}||\nabla \ell(u^\star)||_{L^\dagger}^2 = \frac{1}{\beta_{k_\text{max}}^2}\nabla \ell(u^\star)^TL^\dagger\nabla \ell(u^\star).
\end{align*}
Now, all that remains is bounding the term on the right hand side. Recall the expression for the gradient
\begin{align*}
\nabla \ell(u^\star) = -\frac{1}{m}\sum_{j=1}^m \sum_{i=1}^{k_j} \mathbf{1}[y_j = i] E_{j,k_j} R_{i,k_j}\nabla\log(F_{k_j}({u^\star}^T E_{j,k_j} R_{i,k_j})) = -\frac{1}{m}\sum_{j=1}^m E_{j,k_j}V_{j,k_j},
\end{align*}
where in the equality we have defined $V_{j,k_j} \in \mathbb{R}^{k_j}$ as $$
V_{j,k_j} \defeq \sum_{i=1}^{k_j} \mathbf{1}[y_j = i] R_{i,k_j}\nabla\log(F_{k_j}({u^\star}^T E_{j,k_j} R_{i,k_j})).
$$
Useful in our analysis will be an alternate expression for the gradient, 
\begin{align*}
\nabla \ell(u^\star) = -\frac{1}{m}\sum_{j=1}^m E_{j,k_j}V_{j,k_j} = -\frac{1}{m}X(\mathcal{D})^TV,
\end{align*}
where we have defined $V \in \mathbb{R}^{\Omega_\mathcal{D}}$ as the concatenation of all $V_{j,k_j}$.

Now, we have 
\begin{align}\label{F_grad}
    (\nabla\log(F_{k}(x)))_l =  \mathbf{1}[l=1] - \frac{\exp(x_l)}{\sum_{p=1}^k \exp(x_p)},
\end{align}
and so $\langle \nabla\log(F_{k}(x)), \mathbf{1} \rangle = \frac{1}{F_{k}(x)}\langle \nabla F_{k}(x), \mathbf{1} \rangle = \sum_{l=1}^k(\mathbf{1}[l=1] - \frac{\exp(x_l)}{\sum_{p=1}^k \exp(x_p)}) = 0$, and hence, $V_{j,k_j}^T\textbf{1} = 0$. 

We now consider the matrix $M_k = (I - \frac{1}{k}\mathbf{1}\mathbf{1}^T)$. We note that $M_k$ has rank $k-1$, with its nullspace corresponding to the span of the ones vector. We state the following identities:
\begin{align*}
    M_k = M_k^\dagger = M_k^{\frac{1}{2}} = {M_k^\dagger}^{\frac{1}{2}}.
\end{align*}
Thus we have $M_{k_j}V_{j,k_j} = {M_{k_j}}^{\frac{1}{2}}M_{k_j}^{\frac{1}{2}}V_{j,k_j} = M_k M_k^\dagger V_{j,k_j} = V_{j,k_j}$, where the last equality follows since $V_{j,k_j}$ is orthogonal to the nullspace of $M_{k_j}$. Now, taking expectations over the dataset, we have,
\begin{align*}
\mathbb{E}[V_{j,k_j}] &= \mathbb{E}\Big[\sum_{i=1}^{k_j} \mathbf{1}[y_j = i] R_{i,k_j}\nabla\log(F_{k_j}({u^\star}^T E_{j,k_j} R_{i,k_j}))\Big]\\
&=\sum_{i=1}^{k_j} \mathbb{E}\Big[\mathbf{1}[y_j = i]\Big] R_{i,k_j}\nabla\log(F_{k_j}({u^\star}^T E_{j,k_j} R_{i,k_j}))\\
&=\sum_{i=1}^{k_j} F_{k_j}({u^\star}^T E_{j,k_j} R_{i,k_j}) R_{i,k_j}\nabla\log(F_{k_j}({u^\star}^T E_{j,k_j} R_{i,k_j}))\\
&=\sum_{i=1}^{k_j} R_{i,k_j}\nabla F_{k_j}({u^\star}^T E_{j,k_j} R_{i,k_j})\\
&=\nabla_z \Big(\sum_{i=1}^{k_j} F_{k_j}(z^T R_{i,k_j})\Big) = \nabla_z (1) = 0.
\end{align*}
Here, the third equality follows from applying the expectation to the indicator and retrieving the true probability. The fourth line follows from applying the definition of gradient of log, and the final line from performing a change of variables $z = {u^\star}^TE_{j,k_j}$, pulling out the gradient and undoing the chain rule, and finally, recognizing that the expression sums to $1$ for any $z$, thus resulting in a $0$ gradient. We note that an immediate consequence of the above result is that $\mathbb{E}[V] = 0$, since $V$ is simply a concatenation of the individual $V_{j,k_j}$.

Next, we have
\begin{align*}
\mathbb{E}[\nabla\ell(u^\star)^TL^\dagger\nabla \ell(u^\star)] &= \frac{1}{m^2}\mathbb{E}\Big[\sum_{j=1}^{m}\sum_{l=1}^{m} V_{j,k_j}^TE_{j,k_j}^TL^\dagger E_{l,k_l}V_{l,k_l}\Big]\\
&= \frac{1}{m^2}\mathbb{E}\Big[\sum_{j=1}^{m}\sum_{l=1}^{m} V_{j,k_j}^T{M_{k_j}}^{\frac{1}{2}}E_{j,k_j}^TL^\dagger E_{l,k_l}{M_{k_l}}^{\frac{1}{2}}V_{l,k_l}\Big]\\
&=\frac{1}{m^2}\mathbb{E}\Big[\sum_{j=1}^{m} V_{j,k_j}^T{M_{k_j}}^{\frac{1}{2}}E_{j,k_j}^TL^\dagger E_{j,k_j}{M_{k_j}}^{\frac{1}{2}}V_{j,k_j}\Big]\\
&\leq\frac{1}{m}\mathbb{E}\Big[\sup_{l \in [m]}||V_{l,k_l}||_2^2\Big]\frac{1}{m}\sum_{j=1}^{m} \textbf{tr}\Big({M_{k_j}}^{\frac{1}{2}}E_{j,k_j}^TL^\dagger E_{j,k_j}{M_{k_j}}^{\frac{1}{2}}\Big)\\
&=\frac{1}{m}\mathbb{E}\Big[\sup_{l \in [m]}||V_{l,k_l}||_2^2\Big]\frac{1}{m}\sum_{j=1}^{m} \textbf{tr}\Big(L^\dagger E_{j,k_j}{M_{k_j}}^{\frac{1}{2}}{M_{k_j}}^{\frac{1}{2}}E_{j,k_j}^T\Big)\\
&= \frac{1}{m}\mathbb{E}\Big[\sup_{l \in [m]}||V_{l,k_l}||_2^2\Big]\textbf{tr}\Big(L^\dagger L\Big)\\
&=\frac{1}{m}\mathbb{E}\Big[\sup_{l \in [m]}||V_{l,k_l}||_2^2\Big](d-1),
\end{align*}
where the second line follows from identities of the $M$ matrix, the third from the independence of the $V_{j,k_j}$, the fourth from an upper bound of the quadratic form, the fifth from the properties of trace, the sixth from the definition of the matrix $L$, and the last from the value of the trace, which is simply the identity matrix with one zero entry in the diagonal. We then have that,
\begin{align*}
    \sup_{j \in [m]} ||V_{j,k_j}||_2^2 &= \sup_{j \in [m]}\sum_{i=1}^{k_j} \mathbf{1}[y_j = i] \nabla\log(F_{k_j}(u^T E_{j,k_j} R_{i,k_j}))^TR_{i,k_j}^TR_{i,k_j}\nabla\log(F_{k_j}(u^T E_{j,k_j} R_{i,k_j}))\\
    &=\sup_{j \in [m]} \sum_{i=1}^{k_j} \mathbf{1}[y_j = i] \nabla\log(F_{k_j}(u^T E_{j,k_j} R_{i,k_j}))^T\nabla\log(F_{k_j}(u^T E_{j,k_j} R_{i,k_j}))\\
    &=\sup_{j \in [m]} \sum_{i=1}^{k_j} \mathbf{1}[y_j = i] ||\nabla\log(F_{k_j}(u^T E_{j,k_j} R_{i,k_j}))||_2^2\\
    &\leq \sup_{v \in [-(k_\text{max} -1)B,(k_\text{max} -1)B]^{k_\text{max}}} ||\nabla \log(F_{k_\text{max}}(v))||_2^2 \leq 2,
\end{align*}
where $R_{i,k_j}^TR_{i,k_j}$ in the first line is simply the identity matrix. For the final line, recalling the expression for the log gradient of $F_k$ in equation \eqref{F_grad}, it is straightforward to show that $\sup_{v \in [-(k_\text{max} -1)B,(k_\text{max} -1)B]^{k_\text{max}}} ||\nabla \log(F_{k_\text{max}}(v))||_2^2$ is always upper bounded by 2. We again note that an immediate consequence of this is that the \textit{absolute value} of every element of $V$ is also upper bounded by 2.

Bringing this back to $\mathbb{E}[\nabla\ell(u^\star)^TL^\dagger\nabla \ell(u^\star)]$, we have that 
\begin{align*}
\mathbb{E}[\nabla\ell(u^\star)^TL^\dagger\nabla \ell(u^\star)] \leq \frac{2(d-1)}{m}.
\end{align*}
This immediately yields a bound on the expected risk in the $L$ semi-norm, which is,
\begin{align*}
    \mathbb{E}[||\hat{u}_{\text{MLE}} - u^\star||_L^2] \leq \frac{2(d-1)}{m\beta_{k_\text{max}}^2}.
\end{align*}
By noting that $||\hat{u}_{\text{MLE}} - u^\star||_L^2 = (\hat{u}_{\text{MLE}} - u^\star)L(\hat{u}_{\text{MLE}} - u^\star) \geq \lambda_2(L)||\hat{u}_{\text{MLE}} - u^\star||_L^2$, since $\hat{u}_{\text{MLE}} - u^\star \perp \text{null}(L)$, we can translate this into the $\ell_2$ norm:
\begin{align*}
    \mathbb{E}[||\hat{u}_{\text{MLE}} - u^\star||_2^2] \leq 
\frac{2(d-1)}{m\lambda_2(L)\beta_{k_\text{max}}^2}.
\end{align*}
Now, setting $$c_{B,k_{\text{max}}} \defeq \frac{2}{\lambda_2(L)\beta_{k_\text{max}}^2},$$ we retrieve the theorem statement,
$$
\mathbb{E}\big[\left\lVert 
\hat{u}_{\text{MLE}}(\mathcal D) - u^\star \right\rVert_2^2\big] 
\leq 
c_{B,k_{\text{max}}}\frac{d-1}{m}.
$$
We close with some remarks about $c_{B,k_{\text{max}}}$. The quantity $\beta_{k_\text{max}}$, defined in equation~\eqref{eqn:beta}, serves as the important term that approaches $0$ as a function of $B$ and $k_\text{max}$, requiring that the former be bounded. Finally, $\lambda_2(L)$ is a parallel to the requirements on the algebraic connectivity of the comparison graph in \cite{shah2016estimation} for the multinomial setting. Though the object $L$ here appears similar to the graph Laplacian $L$ in that work, there are major differences that are most worthy of further study.
\qed

\section{Auxiliary Material}
\subsection{Removing Constraints from $\mathcal{M}_2$}\label{section:constraints}
We restate $\mathcal{M}_2$ for convenience. 
\begin{align*}
P(x \mid C) = \frac{\exp(v(x) + \sum_{z \in C \setminus x} v(x \mid \{z\}) )}{\sum_{y \in C} \exp(v(y) + \sum_{z \in C \setminus y} v(y \mid \{z\}) )},
\\
\text{s.t. } \sum_{x \in \mathcal{X}} v(x) = 0, \ \ \sum_{x \in \mathcal{X} \setminus y} v(x \mid \{y\}) = 0, \ \ \forall y \in \mathcal{X}.
\end{align*}
Here, a counting exercise reveals that there are $n^2$ variables ($n$ from the $v(x)$ and $n(n-1)$ from the $v(x \mid \{u\})$ and there are $n+1$ linear equality constraints (1 from the constraint on $v(x)$, and $n$ from the constraints on $v(x \mid \{u\})$). Our goal in this step is to find a parameterization such that there remains only one equality constraint and $n(n-1)$ variables. To do this, we define the variable $u_{xz} \forall x \neq z \in \mathcal{X}$, and subject it to the constraint that $\sum_{x \in \mathcal{X}} \sum_{y \in \mathcal{X} \setminus x} u_{xy} = 0$. Set $v(z) = -\frac{1}{n-1}\sum_{x \in \mathcal{X} \setminus z} u_{xz}$, $\forall z$ and set $v(x \mid \{z\}) = u_{xz} - \frac{1}{n-1} \sum_{y \in \mathcal{X} \setminus z} u_{yz}$. We may then verify that $\sum_{z \in \mathcal{X}} v(z) = \frac{1}{n-1}\sum_{z \in \mathcal{X}}\sum_{x \in \mathcal{X} \setminus z} u_{xz} = 0$ because of the constraint on $u$. We can also verify that
\begin{align*}
\sum_{x \in \mathcal{X}\setminus z} v(x \mid \{z\}) &= \sum_{x \in \mathcal{X}\setminus z} [u_{xz} - \frac{1}{n-1} \sum_{y \in \mathcal{X} \setminus z} u_{yz}] = 0.
\end{align*}
Thus, the assignment is feasible for any $u$ satisfying its sum constraint. Substituting the assignments into the expression for the probability, we have,
\begin{align*}
P(x \mid C) &= \frac{\exp(-\frac{1}{n-1}\sum_{w \in \mathcal{X} \setminus x} u_{wx} + \sum_{z \in C \setminus x} [u_{xz} - \frac{1}{n-1} \sum_{w \in \mathcal{X} \setminus z} u_{wz}])}{\sum_{y \in C} \exp(-\frac{1}{n-1}\sum_{w \in \mathcal{X} \setminus y} u_{wy} + \sum_{z \in C \setminus y} [u_{yz} - \frac{1}{n-1} \sum_{w \in \mathcal{X} \setminus z} u_{wz}])}\\
&= \frac{\exp(-\sum_{z \in C}\frac{1}{n-1}\sum_{w \in \mathcal{X} \setminus z} u_{wz} + \sum_{z \in C \setminus x} u_{xz})}{\sum_{y \in C} \exp(-\sum_{z \in C}\frac{1}{n-1}\sum_{w \in \mathcal{X} \setminus z} u_{wz} + \sum_{z \in C \setminus y} u_{yz})}\\
&=\frac{\exp(\sum_{z \in C \setminus x} u_{xz})}{\sum_{y \in C} \exp(\sum_{z \in C \setminus y} u_{yz} )}
\end{align*}
where the third step follows from $\sum_{z \in C} v(z)$ terms cancelling out across the numerator and denominator. Thus, every $u$ that satisfies the constraint $\sum_{x \in \mathcal{X}} \sum_{y \in \mathcal{X} \setminus x} u_{xy} = 0$ always satisfies the constraints on $v(x)$ and $v(x \mid \{z\})$, and hence the new $P(x \mid C)$ is a valid reparameterization. 
\subsection{Examples of IIA Violations Handled by CDM}\label{section:examples}
Copying over the example from the main text, consider a choice system on $\mathcal{X} = \lbrace a, b, c \rbrace$ where
\begin{align*}
P(a \mid \mathcal{X}) = 0.8, \ \ 
P(b \mid \mathcal{X}) = 0.1, \ \ 
P(c \mid \mathcal{X}) = 0.1. 
\end{align*}
Assuming IIA implies that we can immediately infer the parameters. Using the notation from model $\mathcal{M}_1$, we have that $v(a) = 1.386$, $v(b) = v(c) = -.693$. These three values sum to zero, as per the constraint. We may then state the three relevant pairwise probabilities using these parameters:
\begin{align*}
P(a \mid \{a,b\}) = 0.89, \ \ 
P(b \mid \{b,c\}) = 0.50, \ \ 
P(c \mid \{a,c\}) = 0.11 
\end{align*}
Thus, IIA is full specified and constrained this way. This is in contrast to the CDM, which can specify any arbitrary pairwise probability. As an example, we can model an extreme preference reversal as follows:
\begin{align*}
P(a \mid \{a,b\}) = 0.11, \ \ 
P(b \mid \{b,c\}) = 0.50, \ \ 
P(c \mid \{a,c\}) = 0.89 
\end{align*}
Although $b$ is disproportionately preferred over $a$ in the pair setting, the story almost reverses in the triplet setting. The CDM parameters corresponding to this example are:
$[u_{ab},u_{ac},u_{ba},u_{bc},u_{ca},u_{cb}] = [.693, .693, 2.784, -3.477, 2.784, -3.477]$, where the sum to 0 constraint is being enforced. This notion of preference reversal, and CDM's ability to accommodate it, is actually fairly versatile. Indeed, many of the storied effects in discrete choice, such as those of Similarity Aversion, Asymmetric Dominance, and the Compromise Effect are simply instances of preference reversal. We illustrate this using the following table, adapted from \cite{srivastava2012rational}. $P_{x,A}$ is used to denote the probability of choosing an item $x$ from a set A.
\begin{table}[H]
	\caption{An Overview of the Various Effects} 
	\label{tab:effects}
	\small 
	\centering 
	\begin{tabular}{ccc} 
		\toprule[\heavyrulewidth]\toprule[\heavyrulewidth]
		\textbf{Name} & \textbf{Effect} & \textbf{Constraints} \\
		\midrule
		Preference Reversal & $P_{x,\{x,y\}} > P_{y,\{x,y\}}, \textit{ but } P_{x,\{x,y,z\}} < P_{y,\{x,y,z\}}$ & None\\
		Similarity Aversion & $P_{x,\{x,y\}} > P_{y,\{x,y\}},\textit{ but } P_{x,\{x,y,z\}} < P_{y,\{x,y,z\}}$ & $z \approx x$, splits share\\
		Compromise Effect & $P_{x,\{x,y\}} > P_{y,\{x,y\}},\textit{ but } P_{x,\{x,y,z\}} < P_{y,\{x,y,z\}}$ & $x > y$, $x > z$, $y > z$ \\
		Asymmetric Dominance & $P_{x,\{x,y\}} > P_{y,\{x,y\}},\textit{ but } P_{x,\{x,y,z\}} < P_{y,\{x,y,z\}}$ & $x \approx y$, $y \geq z$ \\
		\bottomrule[\heavyrulewidth] 
	\end{tabular}
\end{table}
Table \ref{tab:effects} provides an overview of the idea that the famous observations of IIA violations in discrete choices are simply instances of preference reversals. Since the CDM can help model such reversals, it can consequently model these effects. 
\subsection{Identifiability and Regularization}\label{section:regl}
In this section, we further explore the concepts developed in the main text about identifiability and regularization. Intricate conditions of identifiability are not unique to the CDM, but are rather widespread in the embeddings literature. These conditions, however, are not very well described or stated anywhere, and especially matter in the embedding setting because regularization is often omitted. Here, we explore a few different models, starting first with the Blade Chest model.
\subsubsection{Blade Chest}
As stated before, we may treat the Blade Chest model as the CDM applied only to the pairwise comparisons. But Theorem \ref{thm:single_set_bad} demonstrates that the CDM is not identified in this setting, hence, neither is the Blade Chest Model. We make this clear as follows. Consider first the full rank case, $d=n$. If $\hat{U}$ is a solution to the problem, then $\tilde{U} = \hat{U} + A$ for any symmetric matrix $A$. Using this, we can consider $d<n$. A subset of solutions when $d<n$ is $\hat{T} + X$, $\hat{C} + Y$, where $X= \beta \hat{C} + \gamma_1 \alpha \beta \hat{T}$, and $Y = \alpha \hat{T} + \gamma_2 \alpha \beta \hat{C}$ where $\alpha,\beta \in \mathbb{R}$, $\gamma_1, \gamma_2 \in \{0,1\}, \gamma_1 \neq \gamma_2$.

 We note that this, however, is only an illustrative small subset to a more general set of solutions that could be better explored through heuristic approaches to the computationally hard affine rank minimization problem.
\subsubsection{Shopper}
Yet another model that suffers from identifiability issues is the Shopper model \cite{ruiz2017shopper}. We refer the reader to the orignal work for a review on the model in order to keep the dicussion here terse.
Consider first the full rank case, $d=n$. If $\hat{U}$ is a solution to the problem, then $\tilde{U} = \hat{U} + \mathbf{1} z^T + \text{diag}(a)$ for any vectors $z,a \in \mathbb{R}^n$. A subset of solutions when $d<n$ is $\hat{T} + x \mathbf{1}^T$, or the origin of the target vector. Though mere shifts of the origin might seem trivial in visualizing the underlying embeddings, these shifts become significant under a measure like cosine distance, or the embeddings use in any absolute, as opposed to relative setting.
\subsubsection{Continuous Bag of Words (CBOW)}
Here, we describe the original CBOW, not the version with negative sampling that is an entirely different objective \cite{rudolph2016exponential}. Consider first the full rank case, $d=n$. If $\hat{U}$ is a solution to the problem, then $\tilde{U} = \hat{U} + \mathbf{1} z^T$ for any vector $z \in \mathbb{R}^n$. A subset of solutions when $d<n$ is $\hat{T} + x \mathbf{1}^T$, or the origin of the target vector. Yet again, when the underlying measure of comparing word similarity is cosine distance---which it frequently is in natural language processing---an origin discrepancy make a difference in underlying task performance.
\subsubsection{Regularization}
A clean solution to issues of uniqueness is to add regularization. Specifically, any amount of $\ell_2$ regularization immediately guarantees identifiability, whereas the same cannot be said of $\ell_1$ regularization. We consider the impact of regularization on the CDM in two specific instances.

\textbf{$\ell_1$ regularization on exponentiated variables.}
Because the CDM is shift invariant, we may set the shift such that the sum of the exponentiated sum of all the rows may be set to 1. That is, $\sum_{y \in \mathcal{X}} \exp(\sum_{x \in \mathcal{X} \setminus y} u_{xy}) = 1$. With such a shift, applying $\ell_1$ regularization to the exponentiated entries may be reformulated as adding a uniform prior of choices from the Universe. Such an idea is described in \cite{ragain2018improving} for the MNL model. This regularization is a valuable addition when the set of observations is small or the comparison graph is irregular. In these settings, the regularization plays a balancing role that is also interpretable for any dataset: additional choices from the universe. However, we know that such an addition alone will not uniquely identify the CDM - especially if the dataset only contains pairwise comparisons, where the CDM will not be identified even with an arbitrarily large sample size. Even with datasets of a choice set size greater than 2, the dataset still requires samples from a diverse range of choice sets within that size before it is identifiable with the regularization. This is consistent with the view that $\ell_1$ does not always identify the CDM.

\textbf{$\ell_2$ regularization on the $U$ matrix.} As stated earlier, any small amount of $\ell_2$ regularization immediately identifies the CDM. Since the ``pairwise comparisons only'' setting suffers in a rather extreme way from identifiability issues, understanding the role $\ell_2$ regularization plays there is important. We recall from earlier than in the setting of pairwise comparisons, the CDM matrix $U$ is only specified up to a symmetric matrix $A$ when inferred from pairwise comparisons. Since $\ell_2$ regularization will minimize the entrywise norm of the $U$ matrix, $A$ will be chosen to be zero. That is, the $U$ matrix will be antisymmetric. We may then use this property to solve for parameter $u_{xy}$ as a function of the pairwise probabilities:
\begin{align*}
u_{xy} = \frac{1}{2}\log\Big(\frac{P_{x,\{x,y\}}}{P_{y,\{x,y\}}}\Big) 
\end{align*}
It is most interesting to look at 
\begin{align*}
u_{xz}-u_{yz} = \frac{1}{2}\log\Big(\frac{P_{z,\{y,z\}}P_{x,\{x,z\}}}{P_{y,\{y,z\}}P_{z,\{x,z\}}}\Big).
\end{align*}
Since $u_{xz}-u_{yz}$ corresponds to the influence a third item $z$'s presence has on the choice between $x$ and $y$, it is interesting that the relative intransitivities of the three items in their respective pairwise settings are leveraged to describe this influence in the triplet case. This is quite possibly the best outcome one could hope for having just pairwise comparisons, and demonstrates the value of regularization. 
\subsection{Auxiliary Lemmas}
\begin{lemma}\label{lemma:norm_results}
For $\Sigma_{\mathcal{D}} \defeq \frac{1}{m^2} X(\mathcal{D})L^\dagger X(\mathcal{D})^T$, where the remaining quantities are defined in the proof of Theorem \ref{thm:CDM_bound}, we have,
\begin{align*}
\textbf{tr}(\Sigma_{\mathcal{D}}) = \frac{d-1}{m} && \textbf{tr}(\Sigma_{\mathcal{D}}^2) = \frac{(d-1)^2}{m^2} && ||\Sigma_{\mathcal{D}}||_\text{op} = \frac{1}{m}.
\end{align*}
\end{lemma}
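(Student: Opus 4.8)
The plan is to recognize $m\Sigma_{\mathcal{D}}$ as the orthogonal projection onto the column space of $X(\mathcal{D})$, after which all three identities are elementary consequences of the spectral structure of a projection matrix. I would carry this out in three steps.

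\emph{Step 1 (collapse the pseudoinverse).} From the proof of Theorem~\ref{thm:CDM_bound} we have $L=\tfrac1m X(\mathcal{D})^T X(\mathcal{D})$, and since $(\alpha A)^\dagger=\alpha^{-1}A^\dagger$ for $\alpha>0$, this gives $L^\dagger = m\,(X(\mathcal{D})^T X(\mathcal{D}))^\dagger$. Substituting into the definition of $\Sigma_{\mathcal{D}}$ yields $\Sigma_{\mathcal{D}} = \tfrac1m\, X(\mathcal{D})\big(X(\mathcal{D})^T X(\mathcal{D})\big)^\dagger X(\mathcal{D})^T$; call this $\tfrac1m P$. \emph{Step 2 (identify $P$ as a projection).} I would compute $P$ from a thin SVD $X(\mathcal{D})=USW^T$, keeping only the $r=\mathrm{rank}(X(\mathcal{D}))$ strictly positive singular values, so that $U^TU=W^TW=I_r$. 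Then $\big(X(\mathcal{D})^T X(\mathcal{D})\big)^\dagger = WS^{-2}W^T$, hence $P=USW^TWS^{-2}W^TWSU^T=UU^T$, which is precisely the orthogonal projector onto $\mathrm{col}(X(\mathcal{D}))$; alternatively one can verify $P=P^T=P^2$ straight from the Moore--Penrose axioms. By the argument in the proof of Theorem~\ref{thm:CDM_bound}, $r=\mathrm{rank}(X(\mathcal{D}))=\mathrm{rank}(L)$, and under the identifiability hypothesis Theorem~\ref{thm:rank_test} forces this rank to equal $d-1$, so $P$ is a rank-$(d-1)$ orthogonal projection.

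\emph{Step 3 (read off the quantities).} A rank-$(d-1)$ orthogonal projection has eigenvalue $1$ with multiplicity $d-1$ and $0$ otherwise, and satisfies $P^2=P$. Thus $\textbf{tr}(\Sigma_{\mathcal{D}})=\tfrac1m\textbf{tr}(P)=\tfrac{d-1}{m}$; also $\textbf{tr}(\Sigma_{\mathcal{D}}^2)=\tfrac1{m^2}\textbf{tr}(P^2)=\tfrac1{m^2}\textbf{tr}(P)=\tfrac{d-1}{m^2}$ (so that the quantity $(\textbf{tr}\,\Sigma_{\mathcal{D}})^2$ equals $\tfrac{(d-1)^2}{m^2}$); and $\|\Sigma_{\mathcal{D}}\|_{\text{op}}=\tfrac1m\|P\|_{\text{op}}=\tfrac1m$, which is nonvacuous since $X(\mathcal{D})\neq 0$. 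Note that the operator-norm identity needs only $X(\mathcal{D})\neq 0$, while the two trace identities genuinely use identifiability to pin the rank.

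The computation is essentially mechanical, so there is no serious obstacle; the one point meriting care — and the place where the hypotheses actually enter — is that $X(\mathcal{D})$ is rank-deficient, its rank being $d-1$ rather than $d$ because $\mathbf{1}\in\mathrm{null}(L)$ by shift invariance. Hence the ``hat matrix'' $X(X^TX)^\dagger X^T$ must be built from the pseudoinverse rather than an inverse, which the SVD argument in Step 2 handles transparently, and the identifiability assumption is exactly what guarantees $\mathrm{rank}(X(\mathcal{D}))$ attains its maximal value $d-1$ so that the traces equal their stated values rather than something smaller.
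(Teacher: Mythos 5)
Your proof is correct and follows essentially the same route as the paper: the paper diagonalizes $L=\tfrac1m X(\mathcal{D})^TX(\mathcal{D})$ and writes $X(\mathcal{D})=\sqrt{m}\,V\Lambda^{1/2}U^T$ to conclude $\Sigma_{\mathcal{D}}=\tfrac1m V\Lambda\Lambda^\dagger V^T$, which is exactly your observation that $m\Sigma_{\mathcal{D}}$ is the rank-$(d-1)$ orthogonal projector onto $\mathrm{col}(X(\mathcal{D}))$, with the rank pinned down by identifiability via Theorem~\ref{thm:rank_test}. One substantive point you have correctly flagged: the spectral structure gives $\textbf{tr}(\Sigma_{\mathcal{D}}^2)=\tfrac{d-1}{m^2}$, not $\tfrac{(d-1)^2}{m^2}$ as printed in the lemma; the paper's own proof (which just reads the three quantities off the spectrum) yields the same value you obtain, so the stated middle identity appears to be a typo, presumably intending either $\textbf{tr}(\Sigma_{\mathcal{D}}^2)=\tfrac{d-1}{m^2}$ or $(\textbf{tr}\,\Sigma_{\mathcal{D}})^2=\tfrac{(d-1)^2}{m^2}$.
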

{\bf Proof.}
Consider first that $L = \frac{1}{m}X(\mathcal{D})^TX(\mathcal{D})$. Since $L$ is symmetric and positive semidefinite, it has an eigenvalue decomposition of $U \Lambda  U^T$. By definition, the Moore-Penrose inverse is $L^\dagger = U \Lambda ^\dagger U^T$. We must have that $X(\mathcal{D}) = \sqrt{m}V \Lambda ^{\frac{1}{2}}U^T$ for some orthogonal matrix $V$ in order for $L$ to equal $\frac{1}{m}X(\mathcal{D})^TX(\mathcal{D})$. With these facts, we have 
\begin{align*}
\frac{1}{m^2} X(\mathcal{D})L^\dagger X(\mathcal{D})^T &=\frac{1}{m^2}\sqrt{m}V \Lambda ^{\frac{1}{2}}U^TU \Lambda ^\dagger U^TU\Lambda ^{\frac{1}{2}}V^T \sqrt{m}\\
&=\frac{1}{m}V\Lambda  \Lambda ^\dagger V^T.
\end{align*}
That is, $\Sigma_{\mathcal{D}}$ is a positive semi-definite matrix with spectra corresponding to $d-1$ values equaling $\frac{1}{m}$, and the last equaling 0. The three results about the traces and the operator norm immediately follow.
\end{document}